\definecolor{dgreen}{rgb}{0.0,0.545,0.0}
\newtheorem{theorem}{Theorem}[section]
\theoremstyle{definition}
\title{Symplectic Neural Networks in Taylor Series Form for Hamiltonian Systems}
\author{Yunjin Tong$^a$\thanks{Co-first author},~~~~Shiying Xiong$^a$\thanks{Co-first author,~~shiying.xiong@dartmouth.edu},~~~~Xingzhe He$^{a,b}$,\\
Guanghan Pan$^{a,c}$,~~~~Bo Zhu$^a$\\
$a$ Dartmouth College, Hanover, NH 03755, United States\\
$b$ Rutgers University, New Brunswick, NJ 08854, United States\\
$c$ Middlebury College, Middlebury, VT 05753, United States
}
\begin{document}
\maketitle
\begin{abstract}
We propose an effective and light-weight learning algorithm, Symplectic Taylor Neural Networks (Taylor-nets), to conduct continuous, long-term predictions of a complex Hamiltonian dynamic system based on sparse, short-term observations.
At the heart of our algorithm is a novel neural network architecture consisting of two sub-networks. Both are embedded with terms in the form of Taylor series expansion designed with symmetric structure. The key mechanism underpinning our infrastructure is the strong expressiveness and special symmetric property of the Taylor series expansion, which naturally accommodate the numerical fitting process of the gradients of the Hamiltonian with respect to the generalized coordinates as well as preserve its symplectic structure. We further incorporate a fourth-order symplectic integrator in conjunction with neural ODEs' framework into our Taylor-net architecture to learn the continuous-time evolution of the target systems while simultaneously preserving their symplectic structures. We demonstrated the efficacy of our Taylor-net in predicting a broad spectrum of Hamiltonian dynamic systems, including the pendulum, the Lotka--Volterra, the Kepler, and the H\'enon--Heiles systems. Our model exhibits unique computational merits by outperforming previous methods to a great extent regarding the prediction accuracy, the convergence rate, and the robustness despite using extremely small training data with a short training period (6000 times shorter than the predicting period), small sample sizes, and no intermediate data to train the networks. \\

\end{abstract}

\section{Introduction}
Hamiltonian mechanics, first formulated by William Rowan Hamilton in 1834 \cite{Hamilton1834}, is one of the most fundamental mathematical tools for analyzing the long-term behavior of complex physical systems studied over the past centuries \cite{Viswanath1994,Feng2010}. Hamiltonian systems are ubiquitous in nature, exhibiting total energy with various forms, as seen in plasma physics \cite{Morrison2005}, electromagnetic physics \cite{Li2019}, fluid mechanics \cite{Salmon1988}, and celestial mechanics \cite{Saari1996}. 
Mathematically, Hamiltonian dynamics describe a physical system by a set of canonical coordinates, i.e., generalized positions and generalized momentum, and uses the conserved form of the symplectic gradient to drive the temporal evolution of these canonical coordinates \cite{Hand2008}. However, for a dynamic system governed by some unknown mechanics, it is challenging to identify the Hamiltonian quantity and its corresponding symplectic gradients by directly observing the system's state, especially when such observation is partial and the sample data is sparse \cite{grigo2019,brunton2016,stinis2019}.

The rapid advent of machine learning (ML) techniques opens up new possibilities to solve the identification problems of physical systems by statistically exploring their underlying structures. On the one hand, data-driven approaches have proven their efficacy in uncovering the underlying governing equations of a variety of physical systems, encompassing applications in fluid mechanics \cite{Brunton2020}, wave physics \cite{Hughes2019}, quantum physics \cite{Sellier2019}, thermodynamics \cite{hernandez2020}, and material science \cite{Teicherta2019}. 
On the other hand, various ML methods have been proposed to boost the numerical simulation of complex dynamical systems by incorporating learning paradigms into simulation infrastructures, e.g., ordinary differential equations \cite{regazzoni2019}, linear or nonlinear partial differential equations \cite{xiong2020roenets,regazzoni2019,Raissi2018,Pang2019,holiday2019,rudy2017}, high-dimensional partial differential equations \cite{Sirignano2018}, inverse problems \cite{Raissi2019}, space-fractional
differential equations \cite{Gulian2018}, systems with noisy multi-fidelity data \cite{Raissi2017}, and pseudo-differential operators \cite{feliu2020,fan2019}, to name a few. More recently, many lines of research have tried to incorporate physical priors into the learning framework, instead of letting the learning algorithm start from scratch, e.g., embedding the notion of an incompressible fluid \cite{mohan2020,xiong2020neural}, the Galilean invariance \cite{ling2016}, a quasistatic
physics simulation \cite{geng2020}, and the invariant quantities in Lagrangian systems \cite{cranmer2020} and Hamiltonian systems \cite{hernandez2020,Greydanus2019,Jin2020,Zhong2020Symplectic,dipietro2020sparse,xiong2020nonseparable}.

There are two critical aspects in learning and predicting the dynamics of a Hamiltonian system.
The first key point is to learn the continuous dynamic time evolution. It is impossible to control the growth of approximation error and monitor the level of error by simply using neural networks to learn the dynamics of a system and integrating using traditional integrators, e.g., Euler \cite{Hairer1987}, Runge--Kutta \cite{Runge1895, Kutta1901}. Secondly and also more challengingly, finding the symplectic gradients that have symmetric structure is hard. The exact solution of a Hamiltonian system leads to a symplectic map from the initial conditions to an arbitrary present state. Due to inaccuracies arising from the computed gradients of a high-dimensional Hamiltonian using traditional neural networks, finding the exact structure of the symplectic gradients from non-differentiable functions will often cause a large error.
To address these two critical aspects, we propose the following solutions. Firstly, we utilize the neural ODE (ODE-net)'s framework, introduced by Chen et al. in 2018, \cite{chen2018neural}, to obtain the continuous evolution. Drawing parallels between residual neural networks \cite{He2016} and the modeling pattern of an ODE, Chen et al. utilize continuously-defined dynamics to naturally incorporate data
that arrive at arbitrary times. The main difficulty lies in addressing the second aspect. To preserve symplectic structure while accurately approximating the continuous-time evolution of dynamical systems, the neural networks have to fulfill two criteria:

\begin{enumerate}
    \item The gradients of the Hamiltonian with respect to the generalized coordinates should be symmetric.
    \item The temporal integration should be symplectic.
\end{enumerate}

We made two essential contributions to meet the above two criteria when processing a Hamiltonian system by incorporating a set of special computing primitives into traditional neural networks. First, to enable symmetric gradients of the Hamiltonian with respect to the generalized coordinates, we construct neural networks that model the gradients and preserve their symmetric structure. Due to the multi-nonlinear-layer architecture of traditional deep neural networks, it is impossible for these networks to fulfill the symmetric property. Thus, we can only use a three-layer network with the form of \emph{linear-activation-linear}, where the weights of the two linear layers are the transpose of each other. However, such a shallow network cannot capture the complexity of Hamiltonian systems. Therefore, in order to maintain the expressive power of the network, we create multiple such three-layer sub-networks and combine them linearly into the Taylor series form. As a result, our network architecture naturally preserves the symmetry of the structure while exhibiting strong expressive power. Furthermore, to enable a symplectic preserving temporal evolution, we implement a fourth-order symplectic integrator \cite{Forest1990, zhu2020deep} within a neural ODE-net architecture \cite{chen2018neural, zhu2020inverse}. This fourth-order integration step enables an explicit fourth-order symplectic mapping to preserve the canonical character of the equations of motion in an exact manner. In other words, it preserves the property that the temporal evolution of a Hamiltonian system yields a canonical transformation from the initial conditions to the final state \cite{Forest1990}.

Based on these two major enhancements, we propose a novel neural network model, \emph{symplectic Taylor neural networks (Taylor-nets)}, to precisely preserve the quantity and predict the dynamics of a Hamiltonian system. The Taylor-nets consist of two sub-networks whose outputs are combined using a fourth-order symplectic integrator. Both sub-networks are embedded with the form of Taylor series expansion and learn gradients of the position and momentum of the Hamiltonian system, respectively. We design the sub-networks such that each term of the Taylor series expansion is symmetric. The symmetric property of the terms and the fourth-order symplectic integrator ensure our model intrinsically preserves the symplectic structure of the underlying system. Therefore, the prediction made by our neural networks leads to a symplectic map from an initial condition to the present state of a Hamiltonian system, which is the most fundamental feature of the exact solution of a Hamiltonian system.

With the integrated design of the sub-networks symmetric structure and the fourth-order symplectic integrator, our learning algorithm is capable of utilizing extremely limited training data to generate highly accurate predictive results that satisfy the conservation laws in various forms. In particular, we demonstrate that the training period of our model can be around 6000 times shorter than its predicting period (other methods have the training period 1--25 times shorter than the predicting period \cite{chen2018neural, Greydanus2019, Jin2020}), and the number of training samples is around 5 times smaller (meaning we use 5 times fewer time-sequences as in the training process) than that used by other methods. Moreover, our method only requires the data collected at the two endpoints of the training period to train the neural networks, without requiring any intermediate data samples in between the initial point and the endpoint. These improvements are crucial for modeling a realistic, complex physical system because they minimize the requirement of training data, which are typically difficult to obtain, and reduce training time by a significant amount. Other major computational merits of our proposed method include its fast convergence rate and robustness. Thanks to the intrinsic structure-preserving characteristic of our method, our model converges more than 10 times faster than the other methods and is more robust under large noise.
Overall, the contributions of our work can be summarized as below:

\begin{itemize}

\item We design a neural network architecture that intrinsically preserves the symplectic structure of the underlying system and predicts the continuous-time evolution of a Hamiltonian system.

\item We embed the form of Taylor series expansion into the neural networks with each term of the Taylor series expansion designed to be symmetric.

\item Our model outperforms other state-of-the-art methods regarding the prediction accuracy, the convergence rate, and the robustness despite using small data with a short training period, small sample sizes, and no intermediate data to train the model.
\end{itemize}

Our work is inspired by previous methodologies that incorporate the symplectic structure of a Hamiltonian system into neural networks. Greydanus et al. first tried to enforce conservative features of the Hamiltonian system by reformulating the loss function using Hamilton's equations, known as Hamiltonian neural networks (HNNs) \cite{Greydanus2019}. Based on HNNs, many works were developed. Chen et al. developed symplectic recurrent neural networks (SRNN), which is a recurrent HNN that relies on a symplectic integrator \cite{Chen2019}. Toth et al. developed the Hamiltonian Generative Network (HGN), learning Hamiltonian dynamics from high-dimensional observations (such as images) without restrictive domain assumptions \cite{Toth2020}. Zhong introduced Symplectic ODE-Net (SymODEN), which adds an external control term to the standard Hamiltonian dynamics in order to learn the system dynamics which conform to Hamiltonian dynamics with control \cite{Zhong2020Symplectic}. Methods like HNN, which focuses on the reformulation of the loss function, incur two main limitations. On the one hand, it requires the temporal derivatives of the momentum and the position of the systems to calculate the loss function, which is difficult to obtain from real-world systems. On the other hand, HNN doesn't strictly preserve the symplectic structure, because its symplectomorphism is realized by its loss function rather than its intrinsic network architecture. Our model successfully bypasses the time derivatives of the datasets by incorporating an integrator solver into the network architecture. Moreover, we design our model differently by embedding a symmetric structure into the neural networks, instead of manipulating the loss function. Thus, our model can strictly preserve the symplectic structure.

Independently, an intrinsic way to encode the symplectic structure is introduced by Jin et al. \cite{Jin2020}. Such neural networks are called Symplectic networks (SympNets), which intrinsically preserve the symplectic structure for identifying Hamiltonian systems. Motivated by SympNets, we invent a neural network architecture to intrinsically preserve the symplectic structure. However, our model preserves two major advantages over SympNets. First, our model is capable of learning the continuous-time evolution of dynamical systems. Second, our model can easily be extended to N-body systems. The parameters scale in the matrix map for training $N$ dimensional Hamiltonian system of our model is $O(1)$. The number of parameters does not increase since based on the interactive models between particle pairs we only need data collected from two bodies as the training data to predict the dynamics of many bodies. However, SympNets require $O(N^2)$ complexity, which makes it hard to generalize to the high-dimensional N-body problems.

The structure of this paper is as follows. In section \ref{sec:math}, we will first introduce the mathematical formulas and their proofs that serve as the foundation of our methodology. Then, we will discuss the design of our neural networks in Taylor series form as well as the proofs of their symplectic structure-preserving property. The next section \ref{sec:num_meth} describes the implementation details and numerical results, which compare our methodology with other state-of-the-art methods, such as ODE-net and HNN. In section \ref{sec:nbody}, we extend the application of our methodology to solve an N-body problem. Lastly, conclusions are drawn in a section \ref{sec:conclusions} with discussions of potential directions of our future research.

\section{Mathematical foundation}\label{sec:math}

\subsection{Hamiltonian mechanics}\label{subsec:symp_int}

We start by considering a Hamiltonian system with $N$ pairs of canonical coordinates (i.e. $N$ generalized positions and $N$  generalized momentum). The time evolution of canonical coordinates is governed by the symplectic gradient of the Hamiltonian \cite{Hand2008}.
Specifically, the time evolution of the system is governed by Hamilton's equations as

\begin{equation}
\begin{dcases}
\frac{\textrm{d} \bm{q}}{\textrm{d} t} = \frac{\partial \mathcal {H}}{\partial \bm{p}} ,\\
\frac{\textrm{d} \bm{p}}{\textrm{d} t} =-\frac{\partial \mathcal {H}}{\partial \bm{q}},
\end{dcases}
\label{eq:Hamilton}
\end{equation}

with the initial condition

\begin{equation}
(\bm{q}(t_0),\bm{p}(t_0)) = (\bm q_0,\bm p_0).
\label{eq:intH}
\end{equation}

In a general setting, $\bm{q}=(q_1,q_2,\cdots,q_N)$ represents the positions and $\bm{p}=(p_1,p_2,...p_N)$ denotes their momentum. Function $\mathcal H = \mathcal H(\bm q, \bm p)$ is the Hamiltonian, which corresponds to the total energy of the system.
By assuming that the Hamiltonian is separable, we can rewrite the Hamiltonian in the form

\begin{equation}
   \mathcal {H}(\bm{q}, \bm{p})= T(\bm{p}) + V(\bm{q}).
   \label{eq:Hpq}
 \end{equation}
This happens frequently in Hamiltonian mechanics, with $T$ being the kinetic energy and $V$ the potential energy. Substituting \eqref{eq:Hpq} into \eqref{eq:Hamilton} yields

\begin{equation}
\begin{dcases}
\frac{\textrm{d} \bm{q}}{\textrm{d} t} = \frac{\partial T(\bm p)}{\partial \bm{p}},\\
\frac{\textrm{d} \bm{p}}{\textrm{d} t} =-\frac{\partial V(\bm q)}{\partial \bm{q}}.
\end{dcases}
\label{eq:HpqVT}
\end{equation}
This set of equations is fundamental in designing our neural networks. Our model will learn the right-hand side (r.h.s.) of \eqref{eq:HpqVT} under the framework of ODE-net.

One of the important features of the time evolution of Hamilton's equations is symplectomorphism, which represents a transformation of phase space that is volume-preserving.
In the setting of canonical coordinates, symplectomorphism means the transformation of the phase flow of a Hamiltonian system conserves the symplectic two-form

\begin{equation}
    \textrm{d} \bm p\wedge \textrm{d} \bm q \equiv \sum_{j=1}^{N}\left(\textrm{d}p_j\wedge \textrm{d}q_j\right),
    \label{eq:dpq}
\end{equation}
where $\wedge$ denotes the wedge product of two differential forms. Inspired by the symplectomorphism feature, we aim to construct a neural network architecture that intrinsically preserves Hamiltonian structure.

\subsection{A symmetric network in Taylor expansion form}
\label{subsec:taylor}

In order to learn the gradients of the Hamiltonian with respect to the generalized coordinates, we propose the following underpinning mechanism, which is a set of symmetric networks that learn the gradients of the Hamiltonian with respect to the generalized coordinates.

\begin{equation}
\begin{dcases}
\bm T_p(\bm p,\bm \theta_p) \rightarrow \frac{\partial T(\bm p)}{\partial \bm p},\\
\bm V_q(\bm q,\bm \theta_q) \rightarrow \frac{\partial V(\bm q)}{\partial \bm q},
\end{dcases}
\label{eq:TpVq}
\end{equation}
with parameters $(\bm\theta_p,\bm\theta_q)$ that are designed to learn the r.h.s. of \eqref{eq:HpqVT}, respectively. Here, the ``$\rightarrow$" represents our attempt to use the left-hand side (l.h.s) to learn the r.h.s.
Substituting \eqref{eq:TpVq} into \eqref{eq:HpqVT} yields

\begin{equation}
\begin{dcases}
\frac{\textrm{d} \bm{q}}{\textrm{d} t} = \bm T_p(\bm p,\bm \theta_p),\\
\frac{\textrm{d} \bm{p}}{\textrm{d} t} = -\bm V_q(\bm q,\bm \theta_q).
\end{dcases}
\label{eq:HpqVT1}
\end{equation}
Therefore, under the initial condition \eqref{eq:intH}, the trajectories of the canonical coordinates can be integrated as

\begin{equation}
\begin{dcases}
\bm q(t) = \bm q_0 + \int_{t_0}^{t} \bm T_p(\bm p,\bm \theta_p) \textrm{d}t,\\
\bm p(t) = \bm p_0 - \int_{t_0}^{t} \bm V_q(\bm q,\bm \theta_q) \textrm{d}t.
\end{dcases}
\label{eq:TVint}
\end{equation}

From \eqref{eq:TpVq}, we obtain

\begin{equation}
\begin{dcases}
\frac{\partial \bm T_p(\bm p,\bm \theta_p)}{\partial \bm p} \rightarrow \frac{\partial^2 T(\bm p)}{\partial \bm p^2},\\
\frac{\partial \bm V_q(\bm q,\bm \theta_q)}{\partial \bm q} \rightarrow \frac{\partial^2 V(\bm q)}{\partial \bm q^2}.
\end{dcases}
\label{eq:dTpVq}
\end{equation}
The r.h.s of \eqref{eq:dTpVq} are the Hessian matrix of $T$ and $V$ respectively, so we can design $\bm T_p(\bm p,\bm \theta_p)$ and $\bm V_q(\bm q,\bm \theta_q)$ as symmetric mappings, that are

\begin{equation}
\frac{\partial \bm T_p(\bm p,\bm \theta_p)}{\partial \bm p} = \left[\frac{\partial \bm T_p(\bm p,\bm \theta_p)}{\partial \bm p}\right]^T,
\label{eq:partial_T}
\end{equation}
and

\begin{equation}
\frac{\partial \bm V_q(\bm q,\bm \theta_q)}{\partial \bm q} = \left[\frac{\partial \bm V_q(\bm q,\bm \theta_q)}{\partial \bm q}\right]^T.
\label{eq:partial_V}
\end{equation}

Due to the multiple nonlinear layers in the construction of traditional deep neural networks, it is impossible for these deep neural networks to fulfill \eqref{eq:partial_T} and \eqref{eq:partial_V}. Therefore, we can only use a three-layer network with the form of \emph{linear-activation-linear}, where the weights of the two linear layers are the transpose of each other, and in order to still maintain the expressive power of the networks, we construct symmetric nonlinear terms, as same as the terms of a Taylor polynomial, and combine them linearly.
Specifically, we construct a symmetric network $\bm T_p(\bm p,\bm \theta_p)$ as

\begin{equation}
   \bm T_p(\bm p,\bm \theta_p) =\left( \sum_{i = 1}^{M}\bm A_i^T \circ f_i \circ \bm A_i - \bm B_i^T \circ f_i \circ \bm B_i\right) \circ \bm p + \bm b,
   \label{eq:Tp_Taylor}
\end{equation}
where `$\circ$' denotes the function composition, $\bm A_i$ and $\bm B_i$ are fully connected layers with size $N_h\times N$, $\bm b$ is a $N$ dimensional bias, $M$ is the number of terms in the Taylor series expansion, and $f_i$ is an element-wise function, representing the $i^{\textrm{th}}$ order term in the Taylor polynomial

\begin{equation}
f_i(x) = \frac{1}{i!}x^i.
\label{eq:Taylor_ex}
\end{equation}
Figure \ref{fig:Taylor_net} plots a schematic diagram of $\bm T_p(\bm p,\bm \theta_p)$ in Taylor-net. The input of $\bm T_p(\bm p,\bm \theta_p)$ is $\bm p$, and $\bm \theta_p = (\bm A_i$, $\bm B_i, \bm b )$. We construct a negative term $\bm B_i^T \circ f_i \circ \bm B_i$ following a positive term $\bm A_i^T \circ f_i \circ \bm A_i$, since two positive semidefinite matrices with opposite signs can represent any symmetric matrix.

\begin{figure}
  \centering
  \includegraphics[width=.7\linewidth]{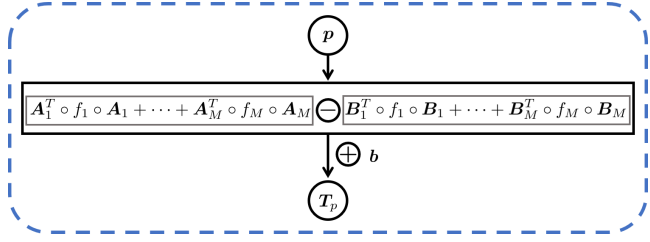}\\[0.5mm]
  \caption{The schematic diagram of $ \bm T_p(\bm p,\bm \theta_p)$ in Taylor-net.}
  \label{fig:Taylor_net}
\end{figure}

To prove \eqref{eq:Tp_Taylor} is symmetric, that is it fulfills \eqref{eq:partial_T}, we introduce theorem \ref{thm:Tp_sym}.
\begin{theorem}
The network \eqref{eq:Tp_Taylor} satisfies \eqref{eq:partial_T}.
\label{thm:Tp_sym}
\end{theorem}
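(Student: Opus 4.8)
The plan is to compute the Jacobian $\partial \bm T_p / \partial \bm p$ directly with the chain rule and observe that the ``sandwich'' structure $\bm A_i^T \circ (\cdot) \circ \bm A_i$ forces symmetry term by term. First I would invoke linearity of differentiation: the derivative of the sum in \eqref{eq:Tp_Taylor} is the sum of the derivatives, and the constant bias $\bm b$ contributes a zero Jacobian. Since the $\bm B_i$ terms are handled by the identical argument (with a sign that does not affect symmetry), it suffices to treat a single positive term $g_i(\bm p) := \bm A_i^{T}\, f_i(\bm A_i \bm p)$, where $f_i$ from \eqref{eq:Taylor_ex} acts element-wise on the vector $\bm A_i \bm p \in \mathbb{R}^{N_h}$.

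Next I would apply the chain rule to $g_i$. Writing $\bm u = \bm A_i \bm p$ and using that $f_i$ is element-wise, its differential at $\bm u$ is the diagonal matrix $D_i(\bm p) = \mathrm{diag}\!\big(f_i'(u_1),\dots,f_i'(u_{N_h})\big)$ with $u_k = (\bm A_i \bm p)_k$. The inner and outer linear maps have constant Jacobians $\bm A_i$ and $\bm A_i^T$, so the chain rule yields $\partial g_i / \partial \bm p = \bm A_i^T D_i(\bm p)\, \bm A_i$.

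The crucial step is then immediate: because $D_i$ is diagonal it is symmetric, whence $\big(\bm A_i^T D_i \bm A_i\big)^T = \bm A_i^T D_i^T \bm A_i = \bm A_i^T D_i \bm A_i$. Thus each term's Jacobian is symmetric, and notably this holds regardless of the specific activation $f_i$ — it is the transpose-pairing of the two linear layers, not the choice of $f_i$, that produces symmetry (the element-wise nature of $f_i$ is what makes $D_i$ diagonal). Summing the symmetric Jacobians of all the $\bm A_i$ and $\bm B_i$ terms over $i$ gives a symmetric matrix, which is precisely the assertion \eqref{eq:partial_T}.

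I expect no genuine obstacle in this argument; the only care required is bookkeeping in the chain rule — specifically recognizing that an element-wise nonlinearity differentiates to a diagonal (hence symmetric) matrix, and that conjugating it between $\bm A_i^T$ and $\bm A_i$ preserves symmetry. I would therefore present the single-term computation in full and close by invoking linearity of differentiation for the full sum.
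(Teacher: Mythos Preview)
Your proposal is correct and matches the paper's own proof essentially step for step: the paper also computes the Jacobian via the chain rule to obtain $\sum_i \bm A_i^T \bm\Lambda_i^A \bm A_i - \bm B_i^T \bm\Lambda_i^B \bm B_i$ with diagonal $\bm\Lambda_i^A,\bm\Lambda_i^B$ coming from the element-wise derivative of $f_i$, and then notes this is manifestly symmetric. Your write-up is slightly more explicit about why the ``sandwich'' $\bm A_i^T D_i \bm A_i$ is symmetric, but the argument is the same.
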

\begin{proof}
From \eqref{eq:Tp_Taylor}, we have

\begin{equation}
    \frac{\partial \bm T_p(\bm p,\bm \theta_p)}{\partial \bm p} = \sum_{i = 1}^{M}\bm A_i^T \bm \Lambda_i^A \bm A_i - \bm B_i^T \bm \Lambda_i^B\bm B_i,
    \label{eq:Tp_d}
\end{equation}

with

\begin{equation}
    \Lambda_i^A = \textrm{diag}\left(\frac{\textrm{d}f}{\textrm{d}x}\Bigg|_{x= \bm A_i\circ \bm p}\right),
\end{equation}

and

\begin{equation}
    \Lambda_i^B = \textrm{diag}\left(\frac{\textrm{d}f}{\textrm{d}x}\Bigg|_{x= \bm B_i\circ \bm p}\right).
\end{equation}

It's easy to see that \eqref{eq:Tp_d} is a symmetric matrix that satisfies \eqref{eq:partial_T}.
\end{proof}
In fact, $\bm T_p(\bm p,\bm \theta_p)$ in \eqref{eq:partial_T} and $\bm V_q(\bm q,\bm \theta_q)$ in \eqref{eq:partial_V} satisfy the same property, so we construct $V_q$ with the similar form as

\begin{equation}
   \bm V_q(\bm q,\bm \theta_q) =\left( \sum_{i = 1}^{M}\bm C_i^T \circ f_i \circ \bm C_i - \bm D_i^T \circ f_i \circ \bm D_i\right) \circ \bm q + \bm d.
   \label{eq:Vq_Taylor}
\end{equation}
Here, $\bm C_i$, $\bm D_i$, and $\bm d$ have the same structure as \eqref{eq:Tp_Taylor}, and $(\bm C_i$, $\bm D_i, \bm d )= \bm \theta_q$.

\subsection{Symplectic Taylor neural networks}\label{subsec:Sym_ode}
Next, we substitute the constructed network \eqref{eq:Tp_Taylor} and \eqref{eq:Vq_Taylor} into \eqref{eq:TVint} to learn the Hamiltonian system \eqref{eq:HpqVT}.
We employ ODE-net \cite{chen2018neural} as our computational infrastructure.
Here we briefly introduce the essential idea of ODE-net for completeness.
Under the perspective of viewing a neural network as a dynamic system, we can treat the chain of residual blocks in a neural network as the solution of an ODE with the Euler method. Given a residual network that consists of sequence of transformations
\begin{equation}
  \bm{h}_{t+1}=\bm{h}_{t}+f(\bm{h}_{t}, \theta_t),
  \label{eq:neural1}
\end{equation}
the idea is to parameterize the continuous dynamics using an ODE specified by a neural network:
\begin{equation}
  \frac{\textrm{d} \bm{h}(t)}{\textrm{d} t} = f(\bm{h}_{t}, t, \theta).
  \label{eq:neural2}
\end{equation}

\RestyleAlgo{ruled}
\begin{algorithm}
  \caption{Integrate \eqref{eq:TVint} by using the fourth-order symplectic integrator}
  \label{alg:int_net}
  \SetAlgoLined
  \KwIn{$\bm q_0,\bm p_0,t_0,t,\Delta t$,\\
  $\bm F_t^j$ in \eqref{eq:Ft} and $\bm F_k^j$ in \eqref{eq:Fk} with $j=1,2,3,4$;}
  \KwOut{$\bm q(t),\bm p(t)$}
  $n = \textrm{floor}[(t-t_0)/\Delta t]$\;
  \For{$i = 1,n$}{
  $(\bm k_p^0,\bm k_q^0) = (\bm p_{i-1},\bm q_{i-1})$;\\
  \For{$j = 1, 4$}{
  $(\bm t_p^{j-1}, \bm t_q^{j-1}) =\bm F_t^j(\bm k_p^{j-1},\bm k_q^{j-1},\Delta t)$;\\
  $(\bm k_p^j, \bm k_q^j) =\bm F_k^j(\bm t_p^{j-1},\bm t_q^{j-1},\Delta t)$;\\}
  $(\bm p_{i},\bm q_{i}) = (\bm k_p^4,\bm k_q^4)$;}
  $\bm q(t)=\bm q_{n},\bm p(t) = \bm p_{n}$.
\end{algorithm}

Inspired by the idea of ODE-net, we design neural networks that can learn continuous time evolution. In Hamiltonian system \eqref{eq:HpqVT}, where the coordinates are integrated as \eqref{eq:TVint}, we can implement a time integrator to solve for $\bm p$ and $\bm q$. While ODE-net uses fourth-order Runge--Kutta method to make the neural networks structure-preserving, we need to implement an integrator that is symplectic. Therefore, we introduce Taylor-net, in which we design the symmetric Taylor series expansion and utilize the fourth-order symplectic integrator to construct neural networks that are symplectic to learn the gradients of the Hamiltonian with respect to the generalized coordinates and ultimately the temporal integral of a Hamiltonian system.

For the constructed networks \eqref{eq:Tp_Taylor} and \eqref{eq:Vq_Taylor}, we integrate \eqref{eq:TVint} by using the fourth-order symplectic integrator \cite{Forest1990}. Specifically, we will have an input layer $(\bm q_0,\bm p_0)$ at $t = t_0$ and an output layer $(\bm q_n,\bm p_n)$ at $t = t_0 + n \textrm{d} t$. The recursive relations of $(\bm q_i,\bm p_i), i = 1,2,\cdots,n$, can be expressed by the algorithm \ref{alg:int_net}. The input function in algorithm \ref{alg:int_net} are

\begin{equation}
\bm F_t^j(\bm p,\bm q,\textrm{d}t) =  \left(\bm p,\bm q+ c_j\bm T_p(\bm p,\bm \theta_p)\textrm{d}t\right),
\label{eq:Ft}
\end{equation}
and
\begin{equation}
\bm F_k^j(\bm p,\bm q,\textrm{d}t) =  \left(\bm p -  d_j \bm V_q(\bm q,\bm \theta_q)\textrm{d}t,\bm q\right),
\label{eq:Fk}
\end{equation}
with
\begin{equation}
\begin{aligned}
c_{1}&=c_{4}={\frac {1}{2(2-2^{1/3})}},&c_{2}&=c_{3}={\frac {1-2^{1/3}}{2(2-2^{1/3})}},&\\
d_{1}&=d_{3}={\frac {1}{2-2^{1/3}}},&d_{2}&=-{\frac {2^{1/3}}{2-2^{1/3}}},& d_{4}=0.
\end{aligned}
\end{equation}
The derivation of the coefficients $c_{j}$ and $d_{j}$ can be found in \cite{Forest1990,Yoshida1990,Candy1991}. Relationships \eqref{eq:Ft} and \eqref{eq:Fk} are obtained by replacing $\partial T(\bm p)/\partial \bm p$ and $ \partial V(\bm q)/\partial \bm q$ in the fourth-order symplectic integrator with deliberately designed neural networks $\bm T_p(\bm p,\bm \theta_p)$ and $\bm V_q(\bm q,\bm \theta_q)$, respectively. Figure \ref{fig:Sym_Taylor_net} plots a schematic diagram of Taylor-net which is described by algorithm \ref{alg:int_net}. The input of Taylor-net is $(\bm q_0,\bm p_0)$, and the output is $(\bm q_n,\bm p_n)$. Taylor-net consists of $n$ iterations of fourth-order symplectic integrator. The input of the integrator is $(\bm q_{i-1},\bm p_{i-1})$, and the output is $(\bm q_{i},\bm p_{i})$. Within the integrator, the output of $\bm T_p$ is used to calculate $\bm q$, while the output of $\bm V_q$ is used to calculate $\bm p$, which is signified by the shoelace-like pattern in the diagram. The four intermediate variables $\bm t_p^0\cdots \bm t_p^4$ and $\bm k_q^0\cdots \bm k_q^4$ indicate that the scheme is fourth-order.

\begin{figure}
  \centering
  \includegraphics[width=0.93\linewidth]{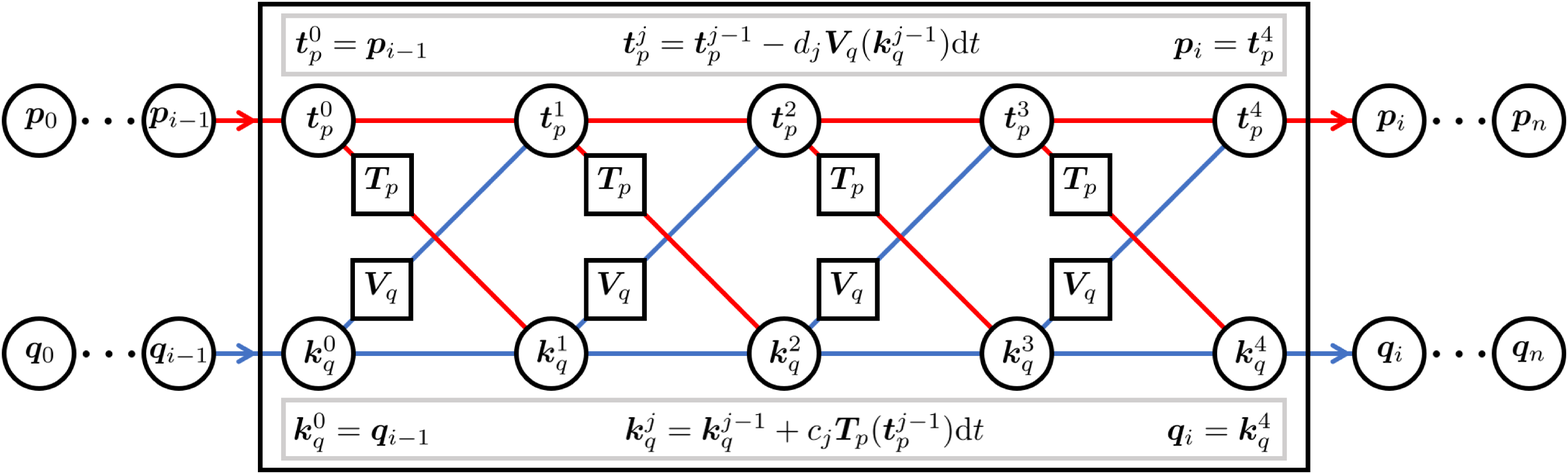}\\[0.5mm]
  \caption{The schematic diagram of Taylor-net. The input of Taylor-net is $(\bm q_0,\bm p_0)$, and the output is $(\bm q_n,\bm p_n)$. Taylor-net consists of $n$ iterations of fourth-order symplectic integrator. The input of the integrator is $(\bm q_{i-1},\bm p_{i-1})$, and the output is $(\bm q_{i},\bm p_{i})$. The four intermediate variables $\bm t_p^0\cdots \bm t_p^4$ and $\bm k_q^0\cdots \bm k_q^4$ show that the scheme is fourth-order.}
  \label{fig:Sym_Taylor_net}
\end{figure}

By constructing the network $\bm T_p(\bm p,\bm \theta_p)$ in \eqref{eq:Tp_Taylor} that satisfies \eqref{eq:partial_T}, we show that theorem \ref{thm:sym_Ft} holds, so the network \eqref{eq:Ft} preserves the symplectic structure of the system.

\begin{theorem}
For a given $\textrm{d}t$, the mapping $\bm F_t^j(:,:,\textrm{d}t):\mathbb{R}^{2N}\rightarrow \mathbb{R}^{2N}$ in \eqref{eq:Ft} is a symplectomorphism if and only if the Jacobian of $\bm T_p$ is a symmetric matrix, that is, it satisifies \eqref{eq:partial_T}.
 \label{thm:sym_Ft}
\end{theorem}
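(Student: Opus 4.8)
The plan is to verify the symplectic condition directly from the definition of symplectomorphism expressed through the two-form \eqref{eq:dpq}, by computing the pullback of $\textrm{d}\bm p\wedge\textrm{d}\bm q$ under $\bm F_t^j$. First I would write the map in coordinates: setting $(\bm P,\bm Q)=\bm F_t^j(\bm p,\bm q,\textrm{d}t)$, equation \eqref{eq:Ft} gives $\bm P=\bm p$ and $\bm Q=\bm q+c_j\,\bm T_p(\bm p,\bm\theta_p)\,\textrm{d}t$. This is a \emph{shear}: the momenta are left unchanged and the positions are displaced by a function of the momenta alone. Because only $\bm T_p$ depends on $\bm p$, the whole computation reduces to a single Jacobian block $S=\partial\bm T_p/\partial\bm p$, whose symmetry is exactly the hypothesis \eqref{eq:partial_T}.

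The key step is a one-line exterior-calculus identity. Taking differentials gives $\textrm{d}P_k=\textrm{d}p_k$ and $\textrm{d}Q_k=\textrm{d}q_k+c_j\,\textrm{d}t\sum_l S_{kl}\,\textrm{d}p_l$, so that
\[
\sum_k \textrm{d}P_k\wedge\textrm{d}Q_k=\sum_k \textrm{d}p_k\wedge\textrm{d}q_k+c_j\,\textrm{d}t\sum_{k,l}S_{kl}\,\textrm{d}p_k\wedge\textrm{d}p_l .
\]
Using antisymmetry of the wedge product ($\textrm{d}p_k\wedge\textrm{d}p_k=0$ and $\textrm{d}p_k\wedge\textrm{d}p_l=-\textrm{d}p_l\wedge\textrm{d}p_k$), the last double sum collapses to $\sum_{k<l}(S_{kl}-S_{lk})\,\textrm{d}p_k\wedge\textrm{d}p_l$. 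Hence the pullback equals the symplectic two-form \eqref{eq:dpq} precisely when this obstruction term vanishes. Equivalently, one may package the same fact in matrix form: the Jacobian of $\bm F_t^j$ is the lower-triangular block matrix with blocks $I,0,c_j\textrm{d}t\,S,I$, and the condition $J^T\Omega J=\Omega$ reduces to $c_j\textrm{d}t\,(S-S^T)=0$.

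From here both implications follow immediately. If $S$ is symmetric, every coefficient $S_{kl}-S_{lk}$ vanishes and the pullback preserves \eqref{eq:dpq}, so $\bm F_t^j$ is a symplectomorphism. Conversely, since $c_j\neq 0$ for $j=1,2,3,4$ and $\textrm{d}t$ is a fixed nonzero step, a symplectomorphism forces $S_{kl}-S_{lk}=0$ for all $k<l$ and all $\bm p$, i.e. $S=S^T$, which is \eqref{eq:partial_T}. I do not expect a genuine obstacle here; the only subtlety worth flagging is bookkeeping, namely that the equivalence must hold pointwise in $\bm p$ (so ``symmetric Jacobian'' is read as a symmetric matrix field), and that the argument is insensitive to the explicit values of the $c_j$ beyond their non-vanishing. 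Finally, combining the ``if'' direction with Theorem \ref{thm:Tp_sym}, which shows that the specific network \eqref{eq:Tp_Taylor} has the symmetric Jacobian \eqref{eq:Tp_d}, yields that the $\bm F_t^j$ built from our Taylor-net is symplectic.
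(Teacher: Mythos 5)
Your proof is correct and follows essentially the same route as the paper's: both compute the pullback of the two-form $\textrm{d}\bm p\wedge\textrm{d}\bm q$ under the shear $\bm F_t^j$ and reduce the symplectic condition to the vanishing of the antisymmetric part of $\partial\bm T_p/\partial\bm p$. If anything, yours is slightly more careful --- your obstruction term correctly reads $c_j\,\textrm{d}t\sum_{k,l}S_{kl}\,\textrm{d}p_k\wedge\textrm{d}p_l$ (the paper's equation \eqref{eq:dtk} appears to carry a typo, pairing $\textrm{d}\bm k_p|_l$ with $\textrm{d}\bm k_q|_m$ rather than $\textrm{d}\bm k_p|_m$), and you make explicit the facts, used only implicitly in the paper, that $c_j\neq 0$ and that the basis two-forms are linearly independent, both of which are needed for the ``only if'' direction.
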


\begin{proof}
Let

\begin{equation}
(\bm t_p, \bm t_q) = \bm F_t^j(\bm k_p, \bm k_q,\textrm{d}t).
\end{equation}

From \eqref{eq:Ft}, we have

\begin{equation}
\begin{aligned}
&\textrm{d}\bm t_p \wedge \textrm{d}\bm t_q = \textrm{d}\bm k_p \wedge \textrm{d}\bm k_q + \\
&\frac{1}{2}\sum_{l,m=1}^N c_j\textrm{d}t  \left[\frac{\partial \bm T_p(\bm k_p,\bm \theta_p)}{\partial \bm k_p}\Bigg|_{l,m} - \frac{\partial \bm T_p(\bm k_p,\bm \theta_p)}{\partial \bm k_p}\Bigg|_{m,l}\right]\textrm{d}\bm k_p|_l \wedge \textrm{d}\bm k_q|_m.
\label{eq:dtk}
\end{aligned}
\end{equation}

Here $\bm A|_{l,m}$ refers to the entry in the $l$-th row and $m$-th column of a matrix $\bm A$, $\bm x|_l$ refers to the $l$-th component of vector $\bm x$. From \eqref{eq:dtk}, we know that $\textrm{d}\bm t_p \wedge \textrm{d}\bm t_q = \textrm{d}\bm k_p \wedge \textrm{d}\bm k_q$ is equivalent to

\begin{equation}
    \frac{\partial \bm T_p(\bm k_p,\bm \theta_p)}{\partial \bm k_p}\Bigg|_{l,m} - \frac{\partial \bm T_p(\bm k_p,\bm \theta_p)}{\partial \bm k_p}\Bigg|_{m,l}=0,\quad \forall l,m= 1,2,\cdots,N,
\end{equation}
which is \eqref{eq:partial_T}.
\end{proof}
Similar to the theorem\ref{thm:sym_Ft}, we can find the relationship between $\bm F_k^j$ and the Jacobian of $\bm V_q$. The proof of \ref{thm:sym_Fk} is omitted as it is similar to the proof of the theorem\ref{thm:sym_Ft}.
\begin{theorem}
For a given \textrm{d}t, the mapping $\bm F_k^j(:,:,\textrm{d}t):\mathbb{R}^{2N}\rightarrow \mathbb{R}^{2N}$ in \eqref{eq:Fk} is a symplectomorphism if and only if the Jacobian of $\bm V_q$ is a symmetric matrix, that is, it satisifies \eqref{eq:partial_V}.
\label{thm:sym_Fk}
\end{theorem}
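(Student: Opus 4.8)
The plan is to mirror the proof of Theorem~\ref{thm:sym_Ft} line by line, using that $\bm F_k^j$ differs from $\bm F_t^j$ only in that it perturbs the momentum by a function of $\bm q$ rather than perturbing the position by a function of $\bm p$. First I would set $(\bm k_p, \bm k_q) = \bm F_k^j(\bm t_p, \bm t_q, \textrm{d}t)$, so that \eqref{eq:Fk} gives $\bm k_p = \bm t_p - d_j \bm V_q(\bm t_q, \bm\theta_q)\textrm{d}t$ and $\bm k_q = \bm t_q$. Taking exterior derivatives yields $\textrm{d}\bm k_q = \textrm{d}\bm t_q$ and $\textrm{d}\bm k_p|_l = \textrm{d}\bm t_p|_l - d_j\textrm{d}t \sum_{m} \frac{\partial \bm V_q(\bm t_q,\bm\theta_q)}{\partial \bm t_q}\big|_{l,m}\,\textrm{d}\bm t_q|_m$.

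Next I would substitute these into the symplectic two-form and expand, which gives
\begin{equation}
\textrm{d}\bm k_p \wedge \textrm{d}\bm k_q = \textrm{d}\bm t_p \wedge \textrm{d}\bm t_q - d_j\textrm{d}t \sum_{l,m=1}^N \frac{\partial \bm V_q(\bm t_q,\bm\theta_q)}{\partial \bm t_q}\Bigg|_{l,m}\,\textrm{d}\bm t_q|_m \wedge \textrm{d}\bm t_q|_l.
\end{equation}
The crucial step is to antisymmetrize the double sum: since $\textrm{d}\bm t_q|_m \wedge \textrm{d}\bm t_q|_l = -\,\textrm{d}\bm t_q|_l \wedge \textrm{d}\bm t_q|_m$, relabeling the indices lets me replace the Jacobian by its antisymmetric part, turning the correction into $-\frac{1}{2}d_j\textrm{d}t\sum_{l,m}\big(\frac{\partial\bm V_q}{\partial\bm t_q}\big|_{l,m} - \frac{\partial\bm V_q}{\partial\bm t_q}\big|_{m,l}\big)\,\textrm{d}\bm t_q|_m \wedge \textrm{d}\bm t_q|_l$. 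This is the exact analogue of \eqref{eq:dtk}, with $\bm V_q$, $\bm q$, and $d_j$ playing the roles of $\bm T_p$, $\bm p$, and $c_j$ (the sign flip coming from the minus sign in \eqref{eq:Fk}).

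The equivalence then follows directly. For the ``if'' direction, if $\bm V_q$ satisfies \eqref{eq:partial_V} every bracket vanishes, so $\textrm{d}\bm k_p \wedge \textrm{d}\bm k_q = \textrm{d}\bm t_p \wedge \textrm{d}\bm t_q$ and $\bm F_k^j$ is a symplectomorphism. For the ``only if'' direction I would invoke the linear independence of the two-forms $\{\textrm{d}\bm t_q|_m \wedge \textrm{d}\bm t_q|_l\}_{m<l}$: if the total correction vanishes identically, each coefficient $\frac{\partial\bm V_q}{\partial\bm t_q}\big|_{l,m} - \frac{\partial\bm V_q}{\partial\bm t_q}\big|_{m,l}$ must vanish for all $l,m$, which is precisely \eqref{eq:partial_V}.

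The one point requiring care, which I view as the main caveat rather than a genuine obstacle, is the ``only if'' direction in the degenerate case $d_j = 0$: because $d_4 = 0$, the map $\bm F_k^4$ reduces to the identity, which is symplectic regardless of $\bm V_q$. The stated equivalence should therefore be read under the standing assumption $d_j\textrm{d}t \neq 0$, which holds for the relevant stages $j = 1,2,3$. Away from that degenerate case the proof is a routine transcription of that of Theorem~\ref{thm:sym_Ft} with the roles of the $\bm p$- and $\bm q$-differentials interchanged.
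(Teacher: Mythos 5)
Your proof is correct and is exactly the argument the paper intends: the paper omits the proof of Theorem~\ref{thm:sym_Fk}, stating only that it is analogous to that of Theorem~\ref{thm:sym_Ft}, and your line-by-line transcription (swapping the roles of the $\bm p$- and $\bm q$-differentials and of $c_j$ and $d_j$, with the antisymmetrization of the double sum as the key step) reproduces that analogue faithfully. Your caveat about the degenerate stage $d_4=0$, for which $\bm F_k^4$ is the identity and the ``only if'' direction fails regardless of $\bm V_q$, is a valid observation that the paper does not address.
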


Suppose that $\Phi_1$ and $\Phi_2$ are two symplectomorphisms. Then, it is easy to show that their composite map $\Phi_2\circ \Phi_1$ is also symplectomorphism due to the chain rule. Thus, the symplectomorphism of the algorithm \ref{alg:int_net}can be guaranteed by the theorems \ref{thm:sym_Ft} and \ref{thm:sym_Fk}.

\section{Numerical methods and results}\label{sec:num_meth}
This section discusses the details of our implementation, including the numerical method to generate training data, the construction of the neural networks, and the predictions for arbitrary time points on a continuous timeline.

\subsection{Dataset Generation}\label{subsec:data}
To make a fair comparison with the ground truth, we generate our training and testing datasets by using the same numerical integrator based on a given analytical Hamiltonian. In the learning process, we generate $N_{train}$ training samples, and for each training sample, we first pick a random initial point $(\bm{q}_0,\bm{p}_0)$ (input), then use the symplectic integrator discussed in section \ref{subsec:symp_int} to calculate the value $(\bm{q}_n,\bm{p}_n)$ (target) of the trajectory at the end of the training period $T_{train}$. We do the same to generate a validation dataset with $N_{validation}=100$ samples and the same time span as $T_{train}$ and calculate the validation loss $L_{validation}$ along the training loss $L_{train}$ to evaluate the training process. In addition, we generate a set of testing data with $N_{test}=100$ samples and predicting time span $T_{predict}$ that is around 6000 times larger and calculate the prediction error $\epsilon_p$ to evaluate the predictive ability of the model.
For simplicity, we use $(\bm{\hat{p}}_n,\bm{\hat{q}}_n)$ to represent the predicted values using our trained model.

We remark that our training dataset is relatively smaller than that used by the other methods. Most of the methods, e.g. ODE-net \cite{chen2018neural} and HNN \cite{Greydanus2019}, have to rely on intermediate data in their training data to train the model. That is the dataset is $[(\bm{q}_{0}^{(s)}, \bm{p}_{0}^{(s)}),(\bm{q}_{1}^{(s)}, \bm{p}_{1}^{(s)}),\dots,(\bm{q}_{n-1}^{(s)},\bm{p}_{n-1}^{(s)}), (\bm{q}_n^{(s)}, \bm{p}_n^{(s)})]_{s=1}^{N_{train}}$ , where $(\bm{q}_{1}^{(s)},\bm{p}_{1}^{(s)})\dots,(\bm{q}_{n-1}^{(s)},\bm{p}_{n-1}^{(s)})$ are $n-1$ intermediate points collected within $T_{train}$ in between $(\bm{q}_{0}^{(s)}, \bm{p}_{0}^{(s)})$ and $(\bm{q}_n^{(s)},  \bm{p}_n^{(s)})$. On the other hand, we only use two data points per sample, the initial data point and the end point, and our dataset looks like $\left[(\bm{q}_{0}^{(s)},\bm{p}_{0}^{(s)}), (\bm{q}_n^{(s)},\bm{p}_n^{(s)})\right]_{s=1}^{N_{train}}$, which is $n-1$ times smaller the dataset of the other methods, if we do not count $(\bm{q}_{0}^{(s)},\bm{p}_{0}^{(s)})$.
Our predicting time span $T_{predict}$ is around 6000 times the training period used in the training dataset $T_{train}$ (as compared to 10 times in HNN). This leads to a 600 times compression of the training data, in the dimension of temporal evolution. Note that we fix $T_{train}$ and $T_{predict}$ in practice so that we can train our network more efficiently on GPU. One can also choose to generate training data with different $T_{train}$ for each sample to obtain more robust performance.

\subsection{Test Cases}
We consider the pendulum, the Lotka--Volterra, the Kepler, and the H\'enon--Heiles systems in our implementation.
\paragraph{Pendulum system}
The Hamiltonian of an ideal pendulum system is given by

\begin{equation}
   \mathcal {H}(q, p) =\frac{1}{2}p^2-\cos{(q)}.
   \label{eq:pendu}
 \end{equation}
We pick a random initial point for training $(\bm{q}_0,\bm{p}_0)\in \left[-2,2\right]\times\left[-2,2\right]$.

\paragraph{Lotka--Volterra system}
For a Lotka--Volterra system, its Hamiltonian is given by

  \begin{equation}
   \mathcal {H}(q, p) =p-e^{p}+2q-e^{q}.
   \label{eq:LV}
 \end{equation}
Similarly, we pick a random initial point for training $(\bm{q}_0,\bm{p}_0)\in \left[-2,2\right]\times\left[-2,2\right]$.

\paragraph{Kepler system}
Now we consider a eight-dimensional system, a two-body problem in 2-dimensional space. Its Hamiltonian is given by

   \begin{equation}
   \begin{aligned}
   \mathcal {H}(\bm{q}, \bm{p}) = &\mathcal {H}(q_1,q_2,q_3,q_4,p_1,p_2,p_3,p_4)\\
   =& \frac{1}{2}(p_1^2+ p_2^2+p_3^2+ p_4^2)-\frac{1}{\sqrt{q_1^2+ q_2^2+q_3^2+ q_4^2}},
   \label{eq:kep1}
   \end{aligned}
 \end{equation}
where $(q_1,q_2)$ and $(p_1,p_2)$ are the position and momentum associated with the first body, $(q_1,q_2)$ and $(p_3,p_4)$ are the position and momentum associated with the second body.
We randomly pick the initial training point $(\bm{q}_0,\bm{p}_0)\in \left[-3,3\right]\times\left[-2,2\right]$, and enforce a constraint on the initial $(q_1,q_2)$ and $(p_1,p_2)$ so that they are at least separated by some distance $L_d=4$. This is to avoid having infinite force immediately.

\paragraph{H\'enon--Heiles system}

Lastly, we introduce a four-dimensional H\'enon--Heiles system, which is a non-integrable system. This kind of chaotic system is generally hard to model. Its Hamiltonian is defined as

 \begin{equation}
   \mathcal {H}(\bm{q}, \bm{p}) = \mathcal  {H}(q_1,q_2, p_1,p_2)= \frac{1}{2}
   (p_1^2+ p_2^2)+\frac{1}{2}(q_1^2+ q_2^2)+(q_1^2 q_2-\frac{q_2^3}{3}),
   \label{eq:hh1}
 \end{equation}
The random initial point for training is $(\bm{q}_0,\bm{p}_0)\in \left[-0.5,0.5\right]\times\left[-0.5,0.5\right]$.

\subsection{Training settings and ablation tests}

For all four systems, we use the Adam optimizer \cite{kingma2014adam}. We choose the automatic differentiation method as our backward propagation method. We have tried both the adjoint sensitivity method, which is used in ODE-net \cite{chen2018neural} and the automatic differentiation method. Both methods can be used to train the model well. However, we found that using the adjoint sensitivity method is much slower than using the automatic differentiation method considering the large parameter size of neural networks. Therefore, we use the automatic differentiation method in our implementation. The detailed derivation of adjoints formulas under the setting of Taylor-net and the prediction result can be found in \ref{sec:adjoint}.

All $A_i$ and $B_i$ in \eqref{eq:Tp_Taylor} are initialized as $A_i, B_i \sim \mathcal{N}(0,\sqrt{2/[N*N_h*(i+1)]})$, where $N$ is the dimension of the system and $N_h$ is the size of the hidden layers. The loss function is

\begin{equation}
L_{train}=\frac{1}{N_{train}}\sum_{s=1}^{N_{train}}\|\bm{\hat{p}}_n^{(s)}-\bm{p}_n^{(s)}\|_1+\|\bm{\hat{q}}_n^{(s)}-\bm{q}_n^{(s)}\|_1.
\label{eq:loss}
\end{equation}
The validation loss $L_{validation}$ is the same as \eqref{eq:loss} but with dataset different from the training dataset. We choose $L1$ loss, instead of Mean Square Error (MSE) loss because $L1$ loss performs better in all cases given in Table \ref{tab:problems}. We conduct the ablation test on these problems to compare the validation loss after convergence with different training loss functions in the training process. Figure \ref{fig:L1_comparison} shows the comparison of validation losses with different training loss functions in the training process of different problems validated by L1 loss function. Figure \ref{fig:MSE_comparison} shows the comparison of validation losses with different training loss functions in the training process of different problems validated by MSE loss function. We observe that for all problems, the validation loss with $L1$ is smaller than that with MSE after convergence. We believe the better performance of $L1$ may be due to MSE loss's high sensitivity to outliers. Hence, we choose to use $L1$ loss as our training loss function.

\begin{table}
  \caption{Set-up of problems.}
  \centering
  \setlength{\tabcolsep}{1mm}{
  \begin{tabular}{lcccc}
  \hline
  Problems & Pendulum & Lotka-Volterra & Kepler & H\'enon--Heiles\\
  \hline
  Hamiltonian & \eqref{eq:pendu}&\eqref{eq:LV}&\eqref{eq:kep1}&\eqref{eq:hh1}\\
  $T_{train}$ &0.01 &0.01 & 0.01&0.01\\
  $T_{predict}$ & $20\pi$&$20\pi$&$20\pi$&10\\
  $N_{train}$ & 15 &25&25&25\\
  Epoch & 100& 150 & 50&100\\
  Learning rate & 0.002 & 0.003 & 0.001&0.001\\
  $step\_size$ & 10 & 10 & 10 & 10\\
  $\gamma$ & 0.8 & 0.8 & 0.8&0.8\\
  $M$ & 8 & 8 & 20&12\\
  $N_h$ & 16 & 8 & 8 & 16\\
  $L_{train}$& $2.75\times 10^{-5}$ & $2.37\times 10^{-5}$ & $7.29\times 10^{-5}$ &$9.24\times 10^{-6}$\\
  $L_{validation}$& $1.39\times 10^{-4}$ & $6.73\times 10^{-5}$ & $6.41\times 10^{-5}$ &$9.44\times 10^{-6}$\\
  \hline
  \end{tabular}}
  \label{tab:problems}
\end{table}

The details of the parameters we set and some other important quantities can be found in Table \ref{tab:problems}. To show the predictive ability of our model, we pick $T_{predict}=20\pi$ for the pendulum, the Lotka--Volterra and the Kepler problems. For the  H\'enon--Heiles problem, we pick $T_{predict}=10$ because of its chaotic nature. We pick 15 as the sample size for the pendulum problem and 25 for other problems since we find that small $N_{train}$'s are sufficient to generate excellent results. More discussions about $N_{train}$ can be found in section \ref{sub:sample}. The epoch parameter represents the number of epochs needed for the training loss to converge. $step\_size$ indicates the period of learning rate decay, and $\gamma$ is the multiplicative factor of learning rate decay. These two parameters decay the learning rate of each parameter group by $\gamma$ every $step\_size$ epochs, which prevents the model from overshooting the local minimum. The dynamic learning rate can also make our model converge faster. $M$ indicates the number of terms of the Taylor polynomial introduced in the construction of the neural networks \eqref{eq:Tp_Taylor}. Through experimentation, we find that 8 terms can represent most functions well. Therefore, we pick $M=8$ for the pendulum and the Lotka-Volterra problems. For more complicated systems, like the Kepler and the  H\'enon--Heiles systems, we choose $M=20$ and $M=12$, respectively.

$N_h$, the dimension of hidden layers, is a parameter that needs to be carefully chosen.
We conduct the ablation test on the pendulum, the
Lotka--Volterra, the Kepler, and the H\'enon--Heiles problems to compare the validation loss using different $N_h$. Figure  \ref{fig:N_h} shows the results of the test. From figure  \ref{fig:N_h}(a), it can be seen that the validation loss after convergence for the pendulum problem drops significantly after increasing $N_h$ from $8$ to $16$ and then stays relatively similar with higher $N_h$. Therefore, we choose to use 16 as $N_h$ for the pendulum problem. Following the same logic, we choose $8$, $8$, and $16$ as $N_h$ for the Lotka--Volterra, the Kepler, and the H\'enon--Heiles problems. Notice that $N_h$ for the lower-dimensional problem, namely, the pendulum problem, is larger than $N_h$ for the higher dimensional problem, the Kepler problem. This is because, for the higher-dimensional problem, the degree of freedom is actually more limited. This is due to the prior knowledge that the forces between objects are the same.

\begin{figure}
    \centering
    \includegraphics[width=.96\linewidth]{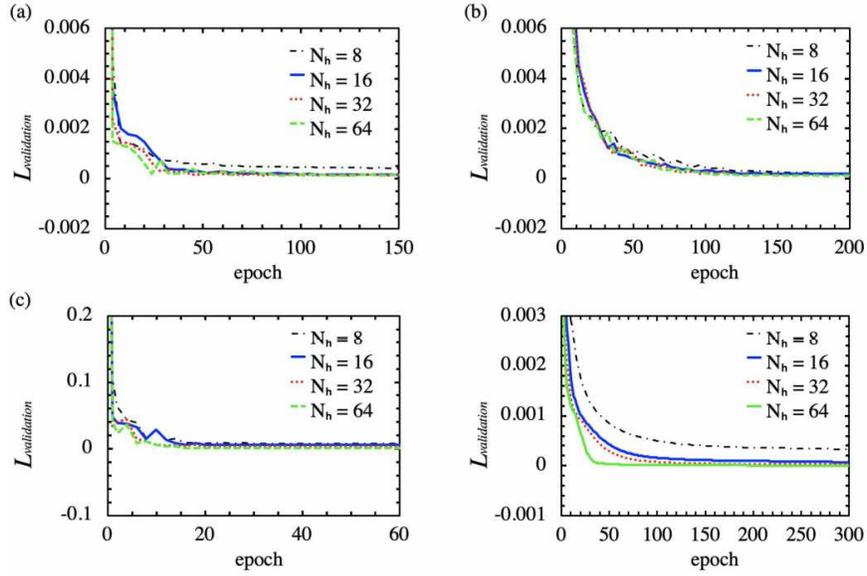}
    \caption{Comparisons of validation losses with different $N_h$ in the training process for (a) the pendulum, (b) the Lotka--Volterra, (c) the Kepler, and (d) the H\'enon--Heiles problems.}
    \label{fig:N_h}
\end{figure}

Another vital parameter that is not mentioned in Table \ref{tab:problems} is the integral time step $\Delta t$ in the sympletic integrator. Notice that the choice of $\Delta t$ largely depends on the time span $T_{train}$. Figure \ref{fig:different_timestep} compares the validation losses generated by various integral time steps $\Delta t$ based on fixed dataset time spans $T_{train}=0.01$, $0.1$ and $0.2$ respectively in the training process. For the concern of gradient vanishing or exploding, notice that when the number of iterations $n$ is big, which is when $\Delta t$ is small, we did not observe these issues, as shown in \ref{fig:different_timestep}(a), where the smallest $\Delta t$ is $10^{-4}$. Since we embed the structure of residual networks in our symplectic integrator, there should not be the problem of vanishing gradient. It is clear that the validation loss converges to a similar degree with various $\Delta t$ based on fixed $T_{train}=0.01$ and $T_{train} = 0.1$ in \ref{fig:different_timestep}(a) and (b), while it increases significantly as $\Delta t$ increases based on fixed $T_{train}=0.02$ in \ref{fig:different_timestep}(c). Thus, we need to be careful when choosing $n$, or $\Delta t$, for the dataset with larger time span $T_{train}$.

\begin{figure}
        \centering
        \includegraphics[width=.96\linewidth]{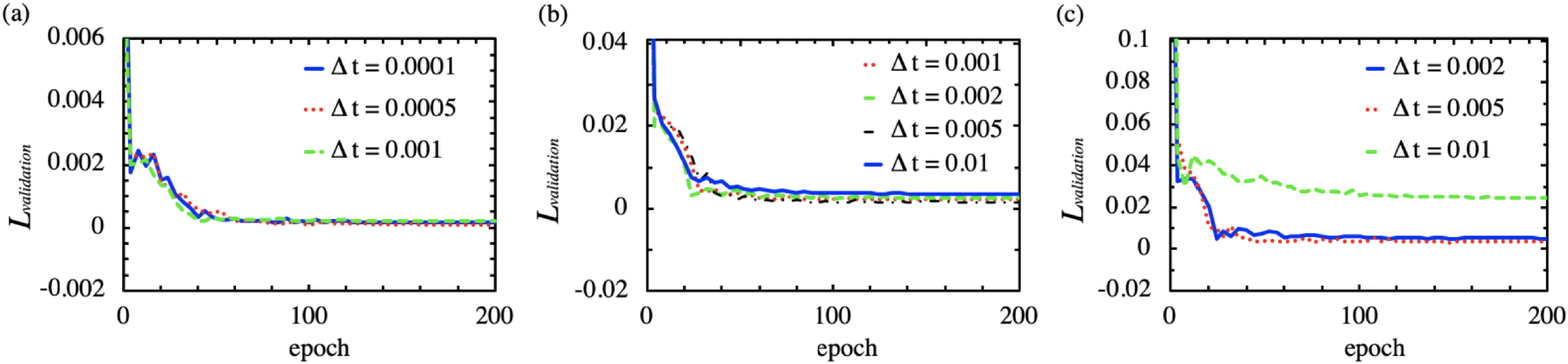}
        \caption{Comparisons of validation losses with different $dt$ in the training process. (a), (b), and (c) are trained based on different time spans $T_{train} = 0.01$, $0.1$, and $0.2$, respectively.}
     \label{fig:different_timestep}
\end{figure}

We record the training loss for all the problems at the epochs specified above. It is worth noticing that the training loss of our model is at $10^{-5}$ order of magnitude and below, which indicates our model's ability to fit the training data. As we can see from figure \ref{fig:Prediction}, the prediction results using Taylor-net match perfectly with the ground truth for all three systems, even though the $T_{train}=0.01$ is $2000\pi$ times shorter than the $T_{predict}=20\pi$ in figure \ref{fig:Prediction} (a) and (b), and $1000$ times shorter in figure \ref{fig:Prediction} (c). In particular, our model predicts the dynamics of the chaotic system, the H\'enon--Heiles system \eqref{eq:hh1} extremely well, which regular neural networks fail to do. The results indicate the compelling predictive ability of our model. This can be seen more clearly in \ref{sub:compare} when we compare Taylor-net with other methods.

\begin{figure}
  \centering
  \psfrag{m}{\scriptsize Ground Truth}
  \psfrag{n}{\scriptsize Prediction}
  \psfrag{x}[c][c]{\footnotesize $\bm  {q}$}
  \psfrag{y}[c][c]{\footnotesize $\bm  p$}
  \psfrag{e}[c][c]{\footnotesize $q_1$}
  \psfrag{f}[c][c]{\footnotesize $q_2$}
  \psfrag{a}[c][c]{\footnotesize (a)}
  \psfrag{b}[c][c]{\footnotesize (b)}
  \psfrag{c}[c][c]{\footnotesize (c)}
  \includegraphics[width=.32\linewidth]{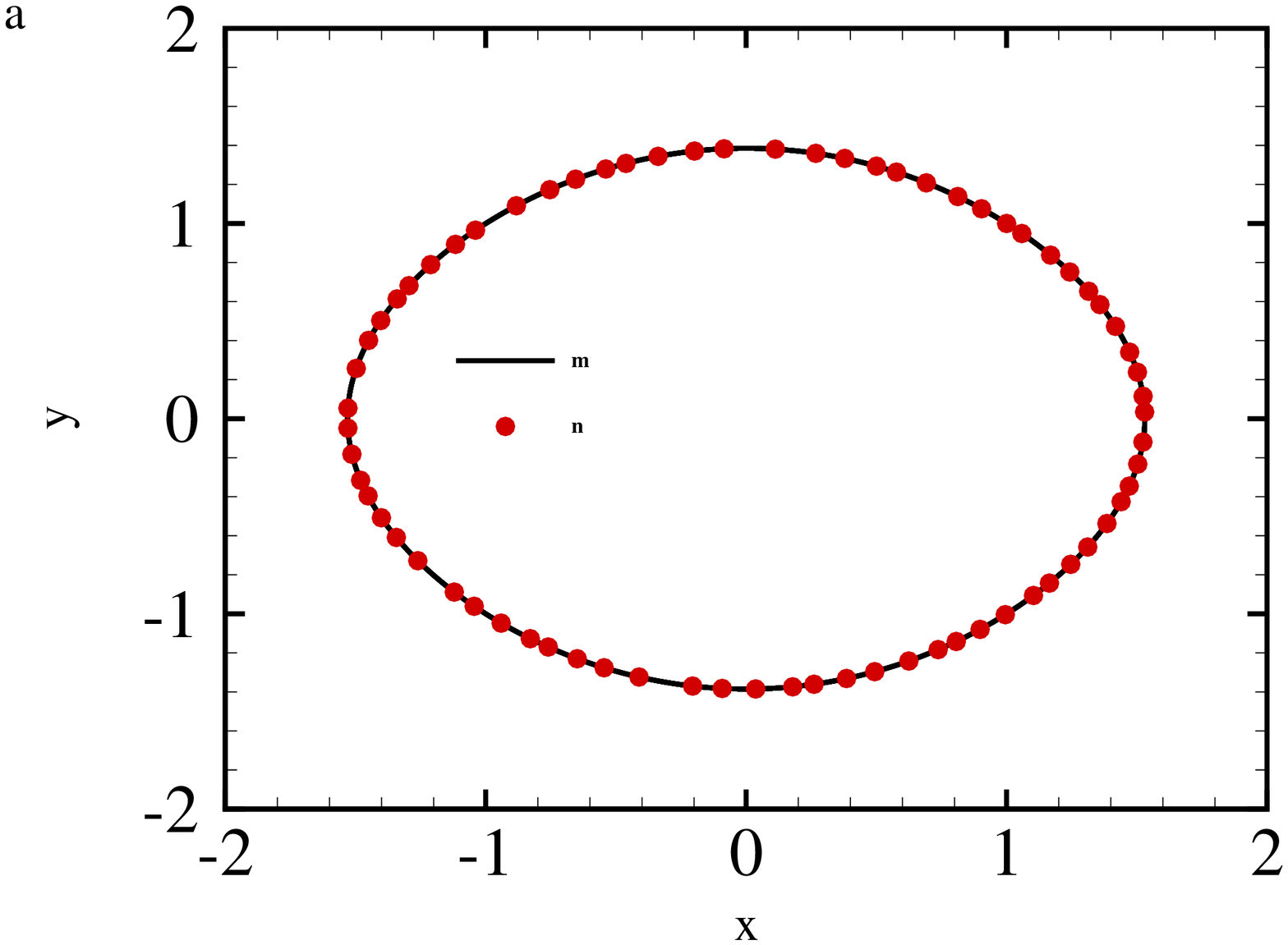}\hfill
  \includegraphics[width=.32\linewidth]{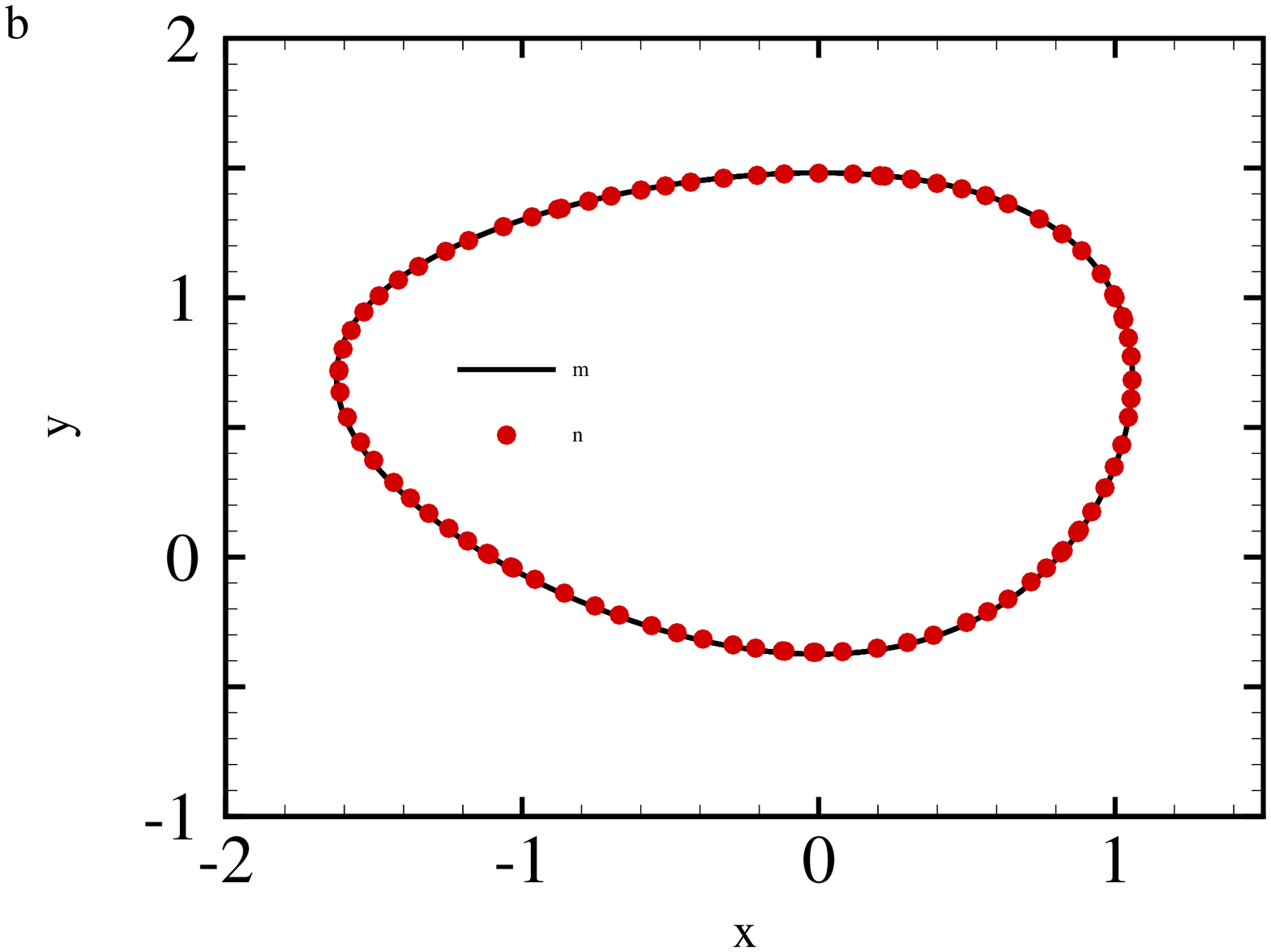}\hfill
  \includegraphics[width=.32\linewidth]{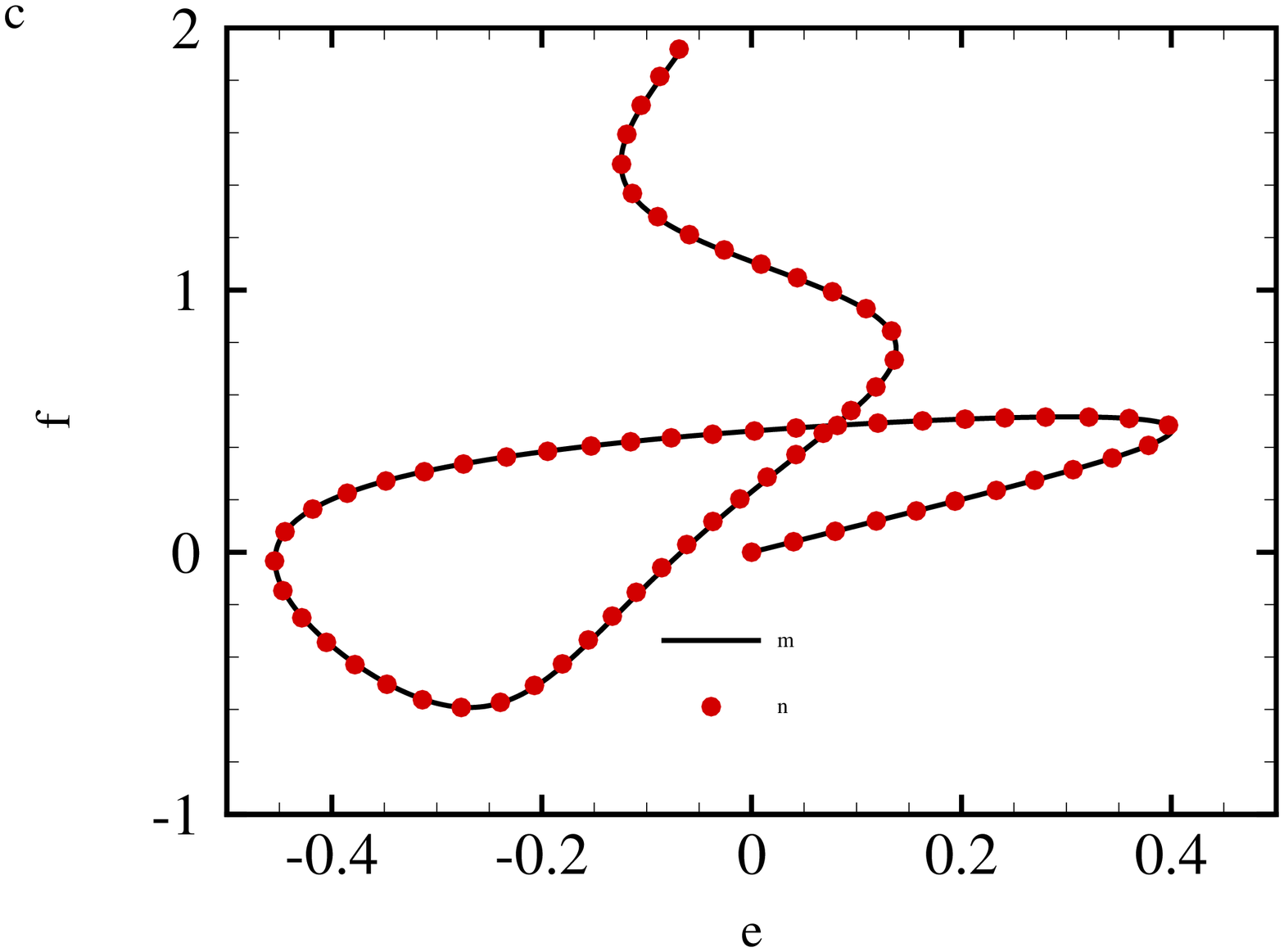}\\[0.5mm]
  \caption{Prediction result using Taylor-net for (a) the pendulum, (b) the Lotka--Volterra, and (c) the H\'enon--Heiles problems. For better visualization, we set the initial points as (a) $(\bm{q}_0,\bm{p}_0)=(1,1)$, (b) $(\bm{q}_0,\bm{p}_0)=(1,1)$, and (c) $(\bm{q}_0,\bm{p}_0)=([0, 0],[0.5, 0.5])$.  The prediction results using Taylor-net match perfectly with the ground truth for all three systems, even though the $T_{train}$ is $2000\pi$ times shorter than the $T_{predict}$ in (a) and (b), and $1000$ times shorter in (c). $T_{train}=0.01$ and $T_{predict}=20\pi$ in (a) and (b), and $T_{train}=0.01$ and $T_{predict}=10$ in (c).}.
  \label{fig:Prediction}
\end{figure}

\subsection{Taylor series vs. ReLU}

In order to evaluate the performance of using Taylor series as the underlying structure of Taylor-net to ensure nonlinearity, we also implement the most commonly used activation function, ReLU and compare the training loss with our current model. We construct the neural networks as \eqref{eq:Tp_Taylor} with parameters specified in Table \ref{tab:problems}, except we use $f_i(x)=\max(0,x)$ instead. The experimental results show that the neural networks perform better with Taylor series than with ReLU in the pendulum, the Lotka--Volterra, and the Kepler problems. We can observe from figure \ref{fig:vsReLU} that in all three problems the loss of using ReLU is larger than the loss of using Taylor series after the loss converges. In the pendulum problem, the mean of loss after convergence from 100 epochs to 300 epochs using the Taylor series is $8.878\times 10^{-5}$, while that of using ReLU is $8.348\times 10^{-4}$, which is 10 times larger than the mean of loss using Taylor series. The difference in the Lotka--Volterra problem is even more obvious. The mean of loss from 100 epochs to 300 epochs using the Taylor series is $7.832\times 10^{-5}$, while that of using ReLU is $4.782\times 10^{-3}$. In the Kepler problem, the mean of loss from 40 epochs to 100 epochs using the Taylor series is $2.524\times 10^{-4}$, while that of using ReLU is $8.408\times 10^{-4}$. In all three problems, the Taylor series performs undoubtedly better than ReLU. Thus, the results clearly show that using the Taylor series gives a better approximation of the dynamics of the system. The strong representational ability of the Taylor series is an important factor that increases the accuracy of the prediction.

\begin{figure}
  \centering
  \psfrag{m}{\scriptsize Taylor Series}
  \psfrag{n}{\scriptsize ReLU}
  \psfrag{x}[c][c]{\footnotesize epoch}
  \psfrag{y}[c][c]{\footnotesize $L_{train}$}
  \psfrag{a}[c][c]{\footnotesize (a)}
  \psfrag{b}[c][c]{\footnotesize (b)}
  \psfrag{c}[c][c]{\footnotesize (c)}
  \includegraphics[width=.32\linewidth]{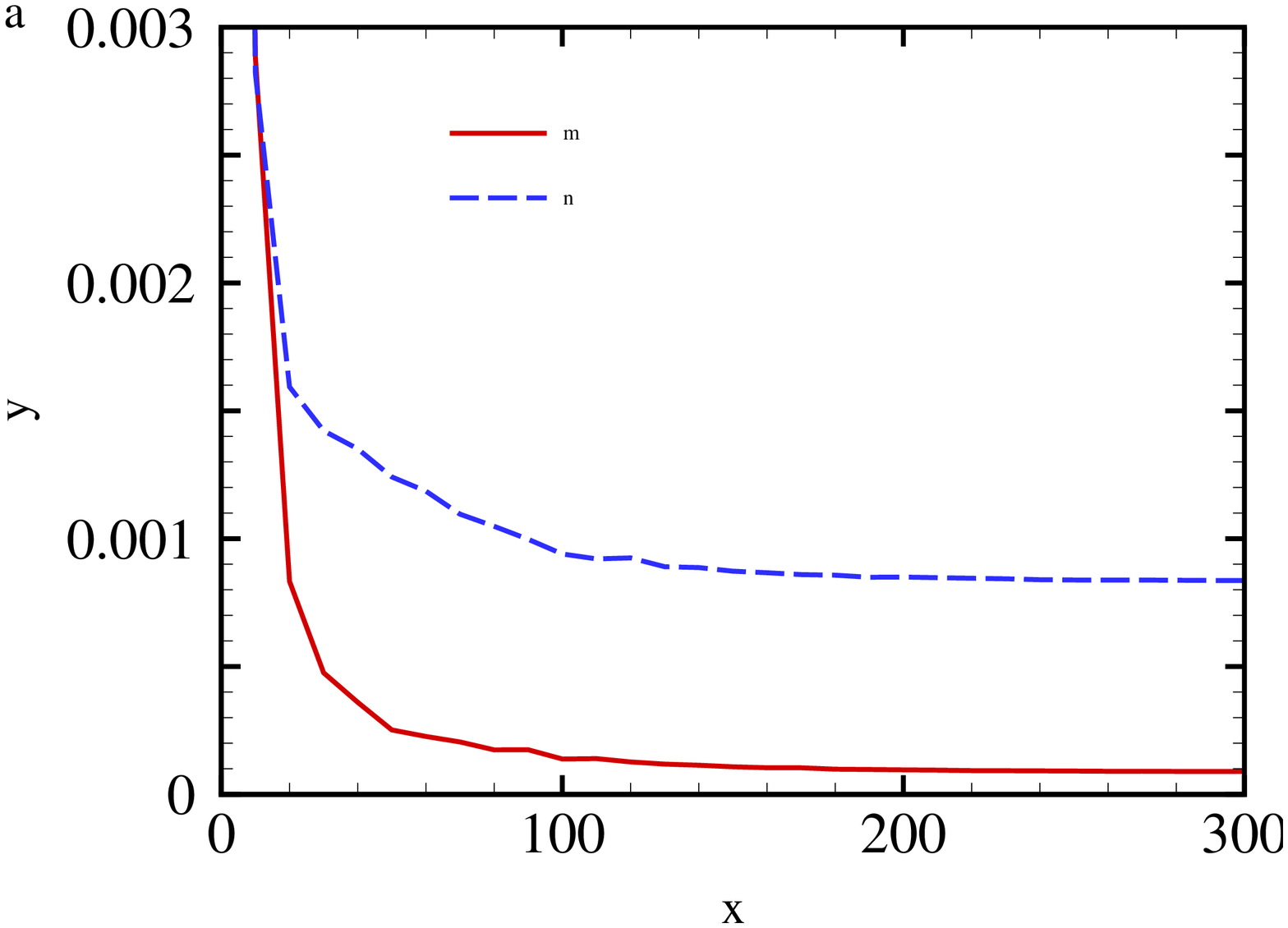}\hfill
  \includegraphics[width=.32\linewidth]{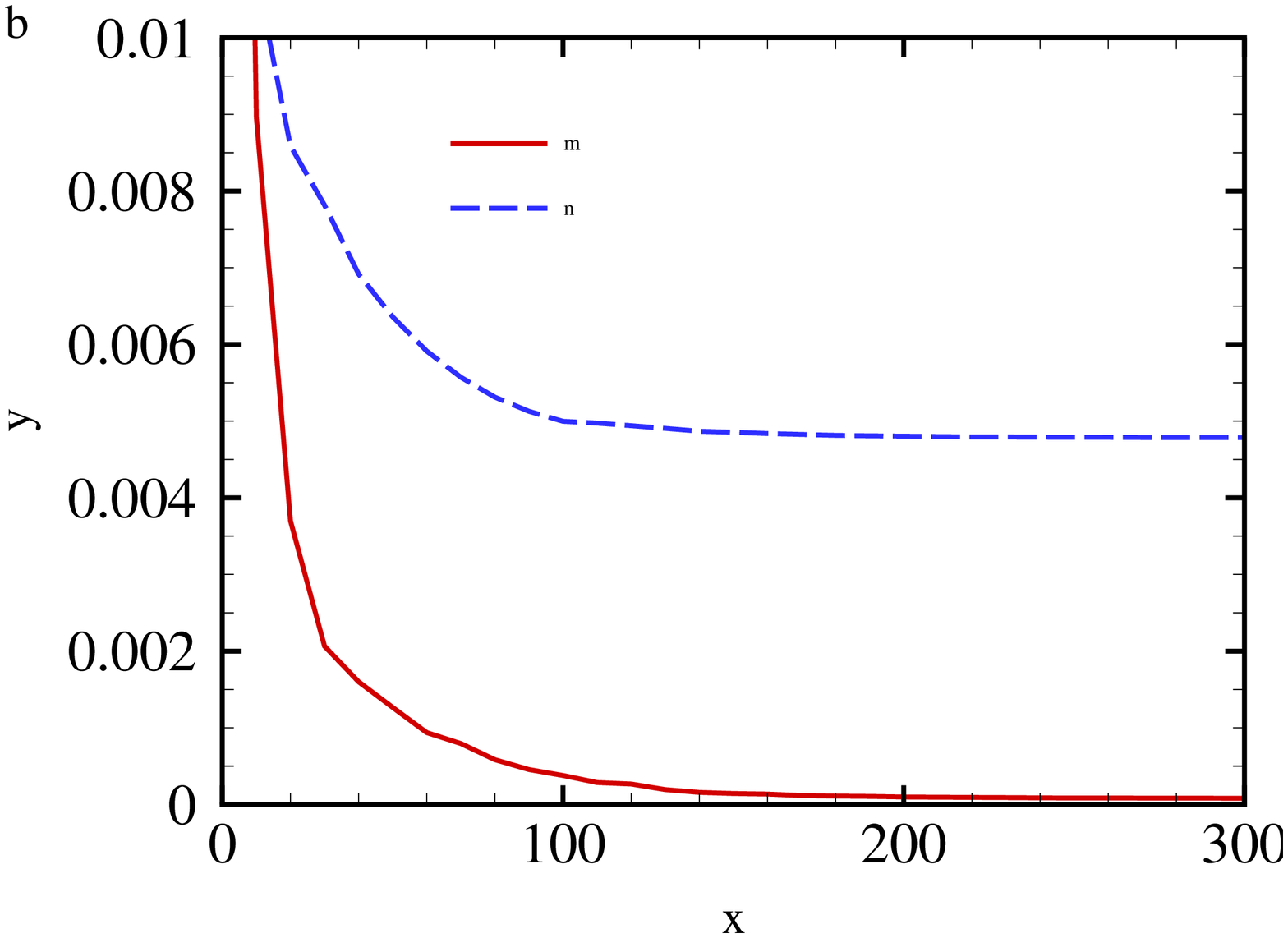}\hfill
  \includegraphics[width=.32\linewidth]{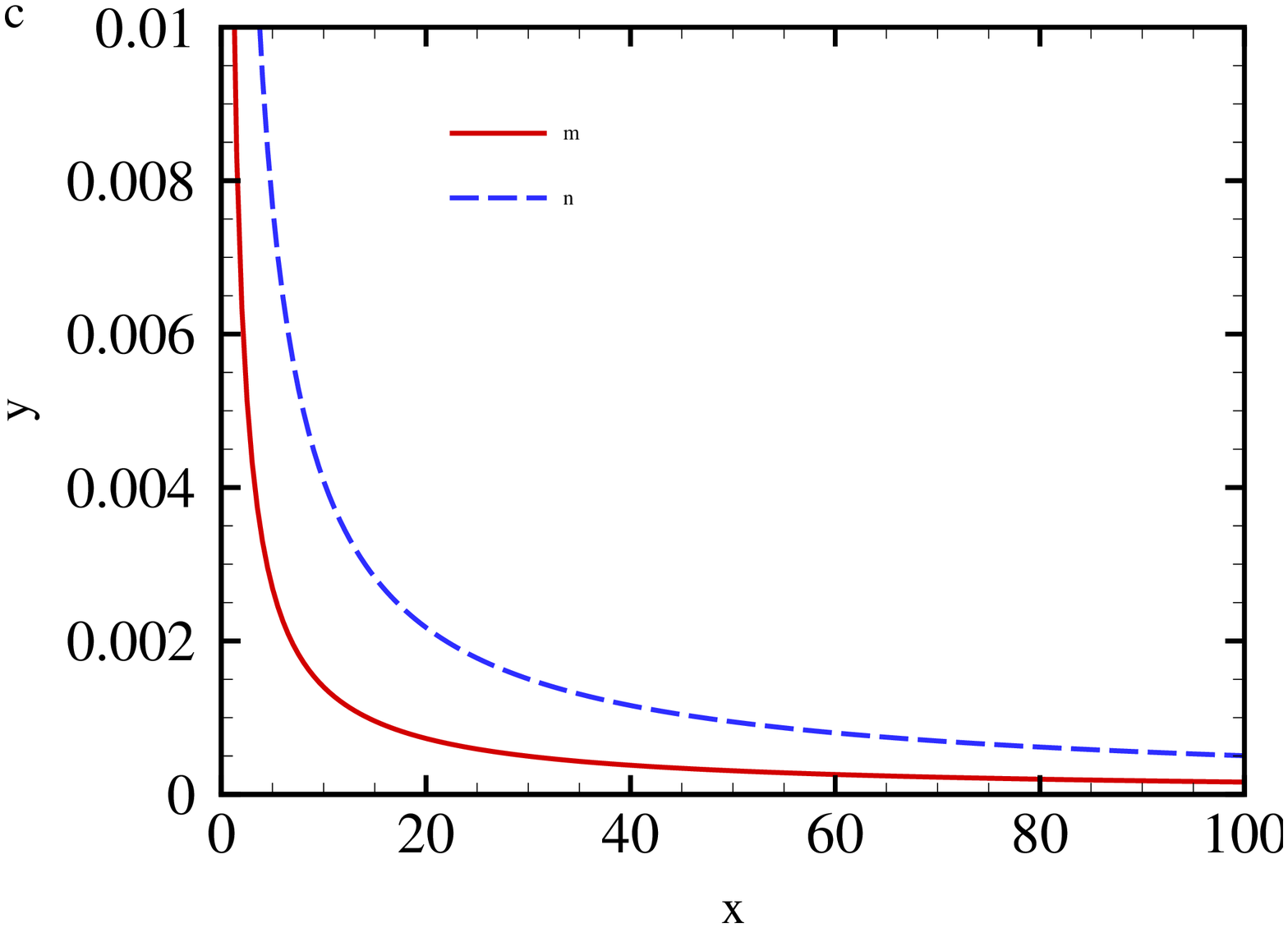}\\[0.5mm]
  \caption{Mean of $L_{train}$ using Taylor series vs. using ReLU for (a) the pendulum, (b) the Lotka--Volterra, and (c) the Kepler problems. We train each model until $L_{train}$ converges and average $L_{train}$ for (a) every 10 epochs for the pendulum problem, (b) every 10 epochs for the Lotka--Volterra problem, and (c) every 5 epochs for the Kepler problem.}
  \label{fig:vsReLU}
\end{figure}

\subsection{Predictive ability and robustness}\label{sub:compare}

Now, to assess how well our method can predict the future flow, we compare the predictive ability of Taylor-net with ODE-net and HNN. We apply all three methods on the pendulum problem, and let $T_{train}=0.01$ and $T_{predict}=20\pi$. We evaluate the performance of the models by calculating the average prediction error at each predicted points, defined by
\begin{equation}
  \epsilon_p^{(n_t)}=\frac{1}{N_{test}}\sum_{s=1}^{N_{test}}\|\bm{\hat{p}}^{(s, n_t)}_n-\bm{p}_n^{(s, n_t)}\|_1+\|\bm{\hat{q}}_n^{(s,  n_t)}-\bm{q}_n^{(s, n_t)}\|_1,
\end{equation}
and the average $\epsilon_p^{(n_t)}$ over $T_{predict}$ is
\begin{equation}
  \epsilon_p=\frac{1}{N_T}\sum_{n_t=1}^{N_T}\epsilon_p^{(n_t)},
\end{equation}
where $N_{test}$ represents the testing sample size specified in section \ref{subsec:data} and $N_T=T_{predict}/\Delta t$ with $\Delta t = 0.01$. After experimentation, we find that Taylor-net has stronger predictive ability than the other two methods. The first row of Table \ref{tab:compa_err} shows the average prediction error of 100 testing samples using the three methods over $T_{predict}$ when no noise is added. The prediction error of HNN is almost double that of Taylor-net, while the prediction error of ODE-net is about 7 times that of Taylor-net. To analyze the difference more quantitatively, we made several plots to help us better compare the prediction results. Figure \ref{fig:Error} shows the plots of prediction error $\epsilon_p^{(n_t)}$ against $t=n_t \Delta t$ over $T_{predict}$ for all three methods. In figure \ref{fig:prediction_q}, we plot the prediction of position $q$ against time period for all three methods as well as the ground truth in order to see how well the prediction results match the ground truth. From figure \ref{fig:prediction_q} (a), we can already see that the prediction result of ODE-net gradually deviates from the ground truth as time progresses, while the prediction of Taylor-net and HNN stays mostly consistent with the ground truth, with the former being slightly closer to the ground truth. The difference between Taylor-net and HNN can be seen more clearly in figure \ref{fig:Error} (a). Observe that the prediction error of Taylor-net is obviously smaller than that of the other two methods, and the difference becomes more and more apparent as time increases. The prediction error of ODE-net is larger than HNN and Taylor-net at the beginning of $T_{predict}$ and increases at a much faster rate than the other two methods. Although the prediction error of HNN has no obvious difference from that of Taylor-net at the beginning, it gradually diverges from the prediction error of Taylor-net.
\begin{table}
  \caption{Comparison of $\epsilon_p$ for the pendulum problem without noise, with noise $\sigma_1, \sigma_2 \sim \mathcal{N}(0,0.1)$, and with noise $\sigma_1, \sigma_2 \sim \mathcal{N}(0,0.5)$.}
  \centering
  \setlength{\tabcolsep}{1mm}{
  \begin{tabular}{lccc}
  \hline
  Methods & Taylor-net & HNN & ODE-net\\
  \hline
  $\epsilon_p$, without noise & 0.213 &0.377&1.416\\
  $\epsilon_p$, with noise $\sigma_1, \sigma_2 \sim \mathcal{N}(0,0.1)$ & 1.667 &2.433&3.301\\
  $\epsilon_p$, with noise $\sigma_1, \sigma_2 \sim \mathcal{N}(0,0.5)$&1.293  &2.416 & 27.114\\
  \hline
  \end{tabular}}
  \label{tab:compa_err}
\end{table}
\begin{figure}
  \centering
  \psfrag{m}{\scriptsize Taylor-net}
    \psfrag{n}{\scriptsize HNN}
    \psfrag{l}{\scriptsize ODE-net}
    \psfrag{x}[c][c]{\footnotesize $t$}
    \psfrag{y}[c][c]{\footnotesize $\epsilon_p^{(n_t)}$}
    \psfrag{a}[c][c]{\footnotesize (a)}
   \psfrag{b}[c][c]{\footnotesize (b)}
   \psfrag{c}[c][c]{\footnotesize (c)}
  \includegraphics[width=.32\linewidth]{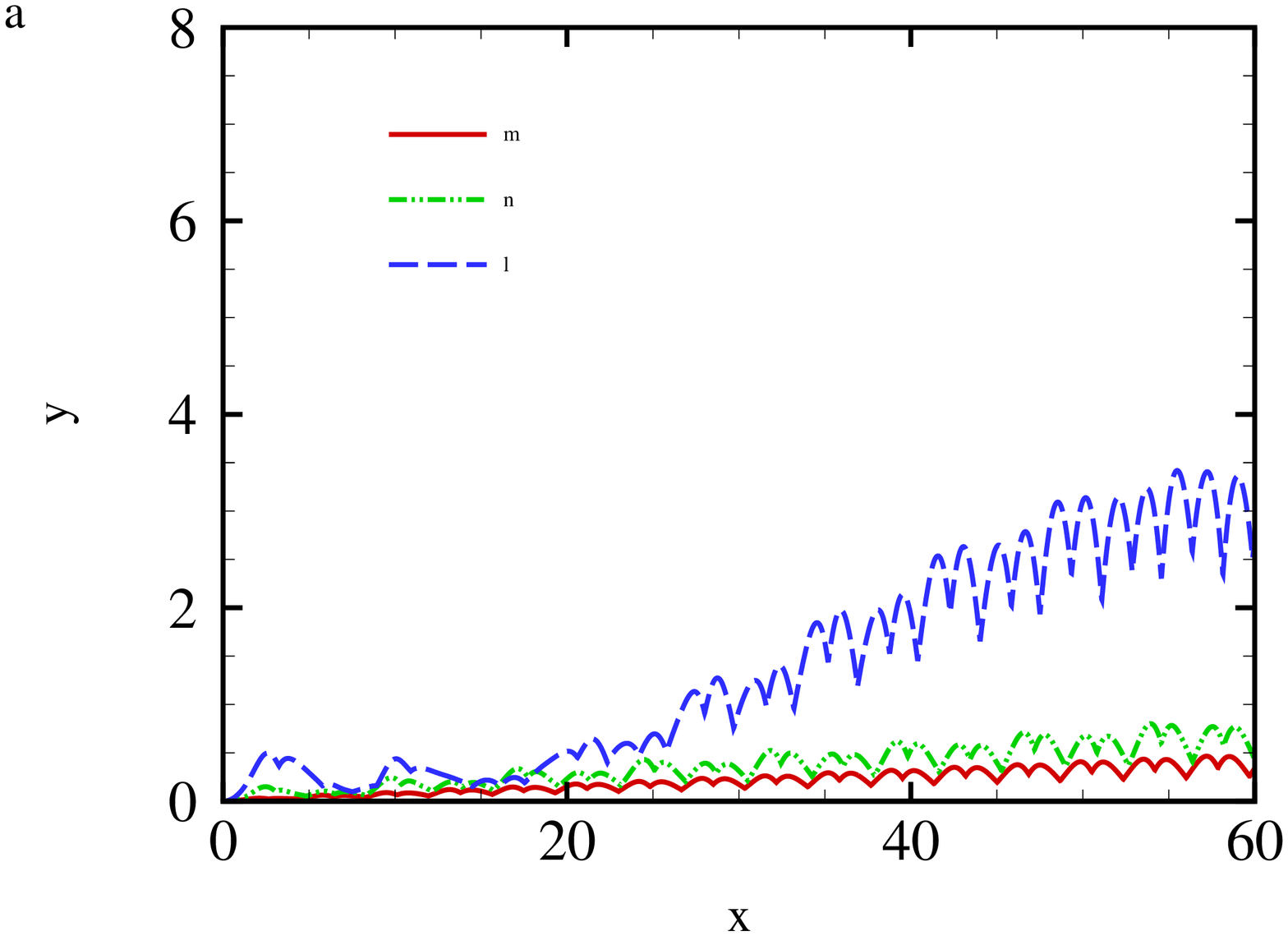}\hfill
  \includegraphics[width=.32\linewidth]{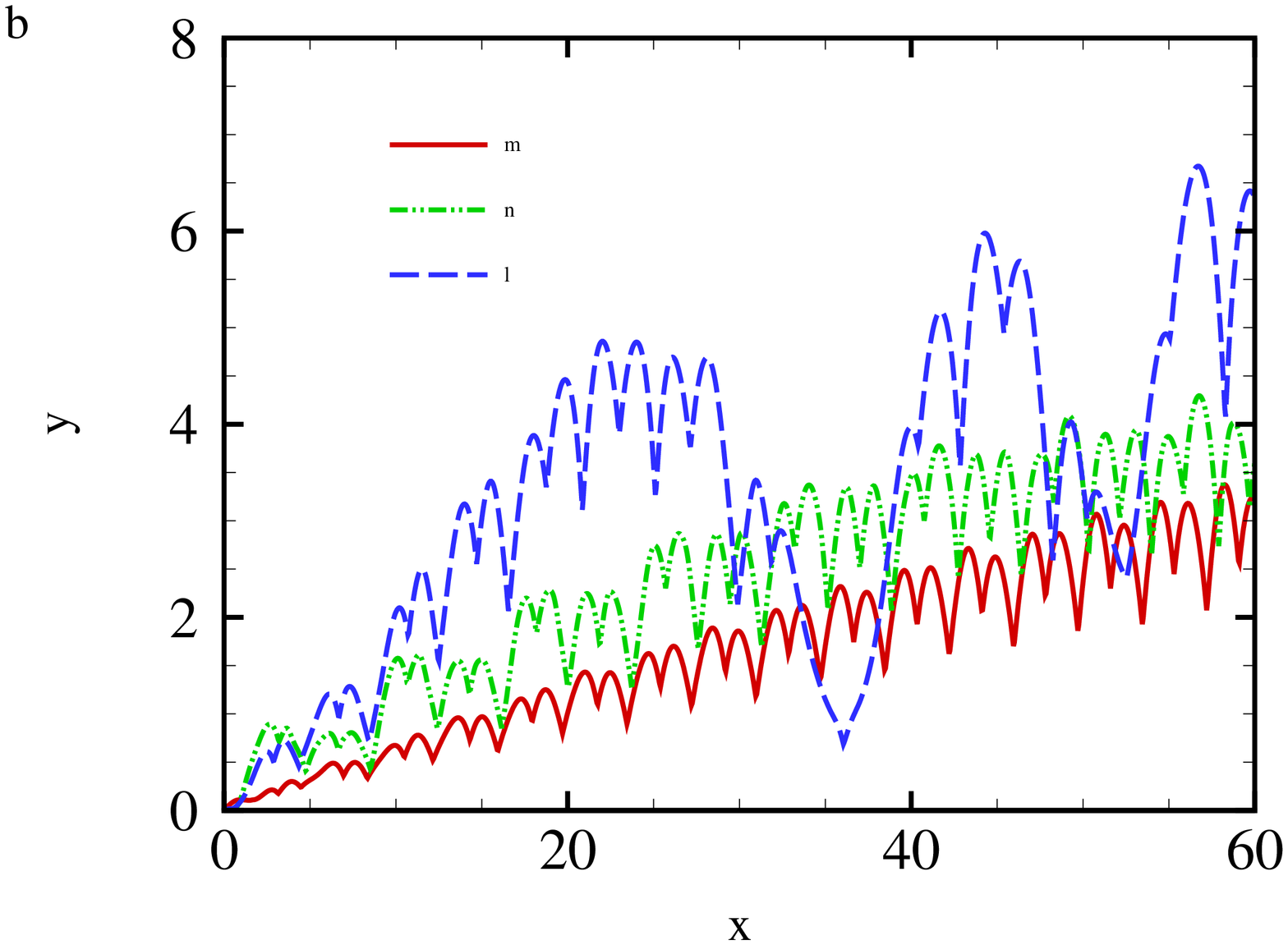}\hfill
  \includegraphics[width=.32\linewidth]{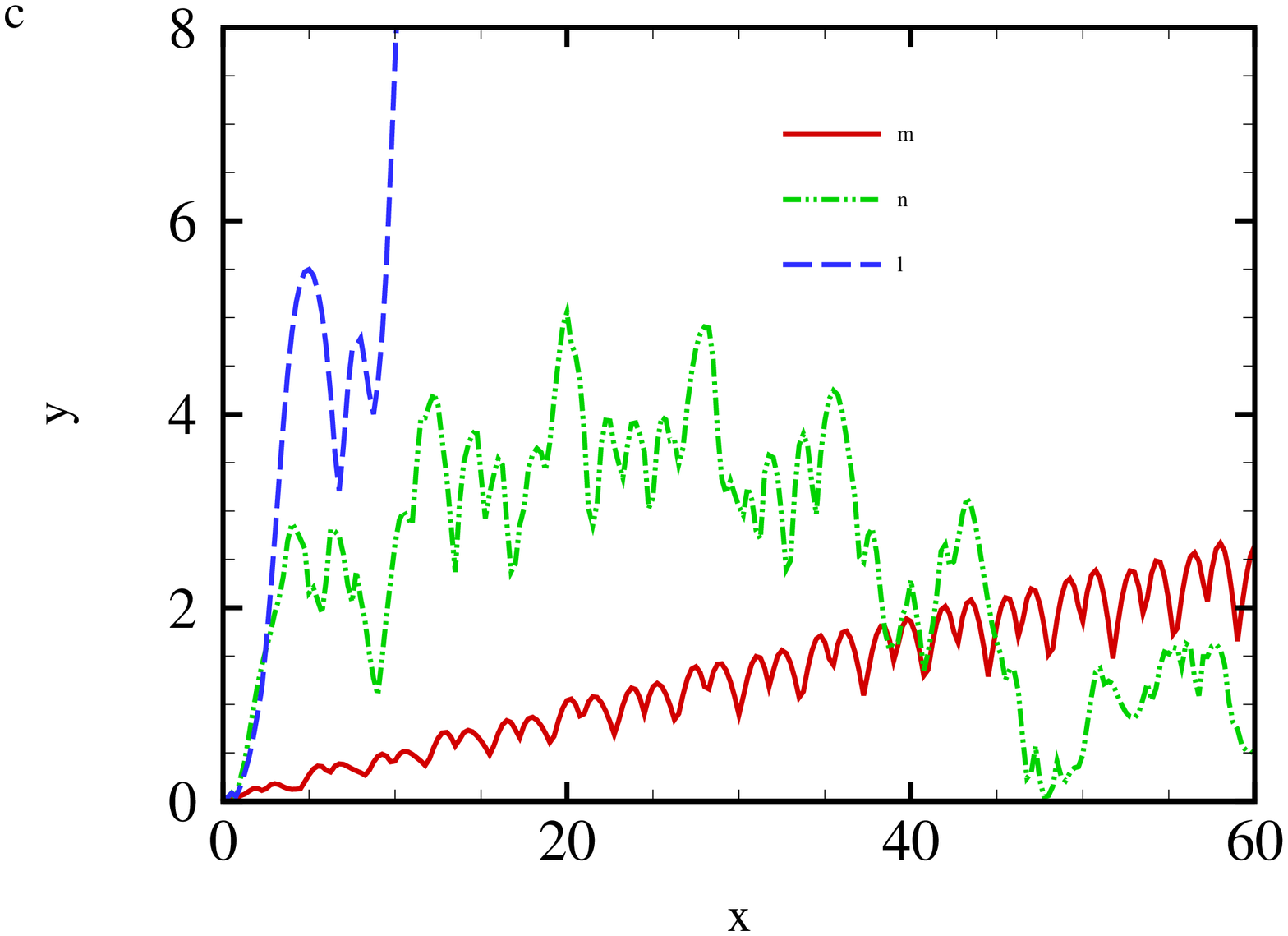}
  \\[0.5mm]
  \caption{Prediction error $\epsilon_p^{(n_t)}$ at different $t$ from $t=0$ to $t=20\pi$ for the pendulum problem (a) without noise, (b) with noise $\sigma_1, \sigma_2 \sim \mathcal{N}(0,0.1)$, and (c) with noise $\sigma_1, \sigma_2 \sim \mathcal{N}(0,0.5)$. In the figure, $t=n_t \Delta t$, where $\Delta t=0.01$. $\epsilon_p^{(n_t)}$ is the prediction error at the $n_t^{\textrm{th}}$ predicted point among the total $N_T=T_{predict}/\Delta t$ predicted points. We use $T_{train}=0.01$, $T_{train}=0.5$ and $T_{train}=1$ to train the model in (a), (b), and (c), respectively.}
  \label{fig:Error}
\end{figure}

\begin{figure}
  \centering
   \psfrag{m}{\scriptsize Ground Truth}
    \psfrag{n}{\scriptsize Taylor-net}
    \psfrag{l}{\scriptsize HNN}
    \psfrag{o}{\scriptsize ODE-net}
    \psfrag{a}[c][c]{\footnotesize (a)}
    \psfrag{b}[c][c]{\footnotesize (b)}
    \psfrag{c}[c][c]{\footnotesize (c)}
    \psfrag{x}[c][c]{\footnotesize $t$}
    \psfrag{y}[c][c]{\footnotesize $\bm q$}
  \includegraphics[width=0.92\linewidth]{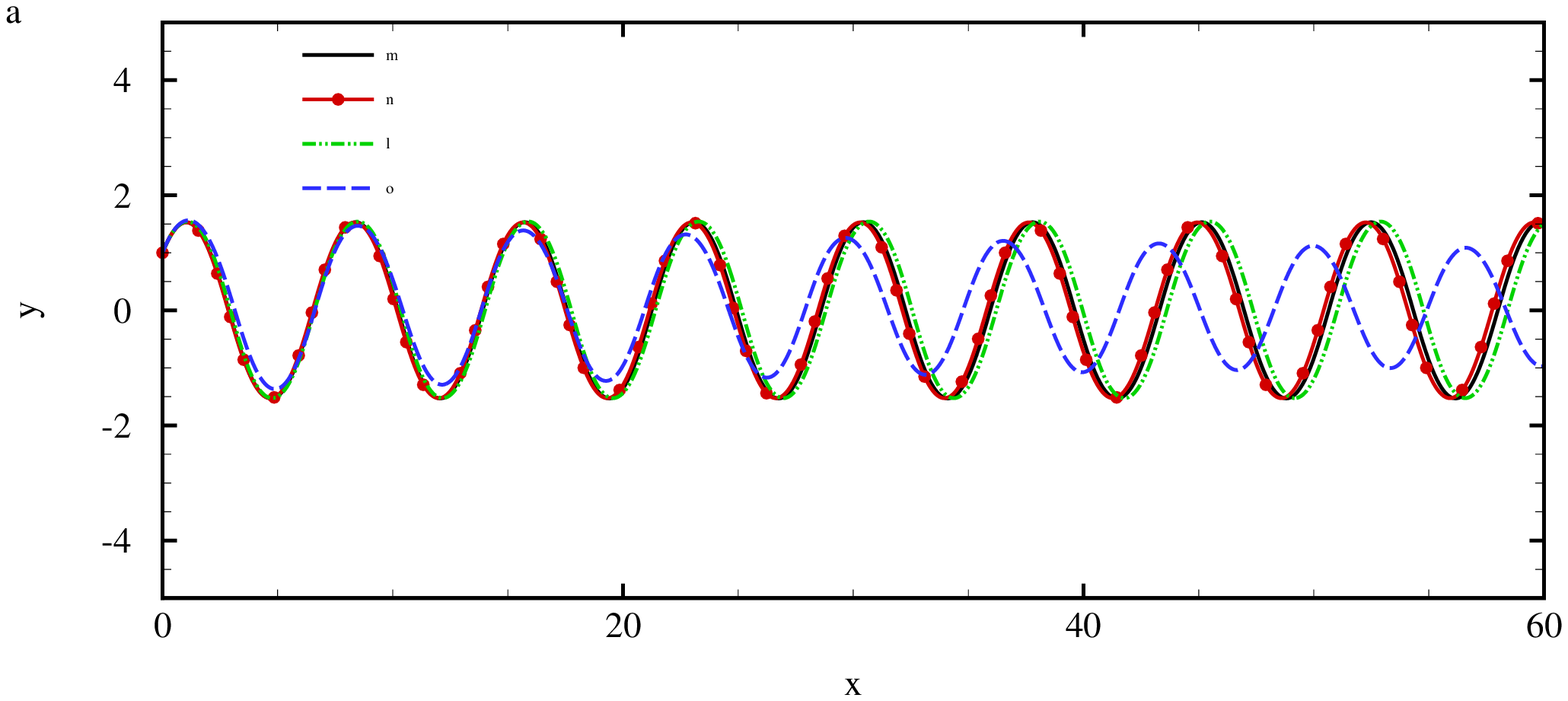}\\
  \includegraphics[width=0.92\linewidth]{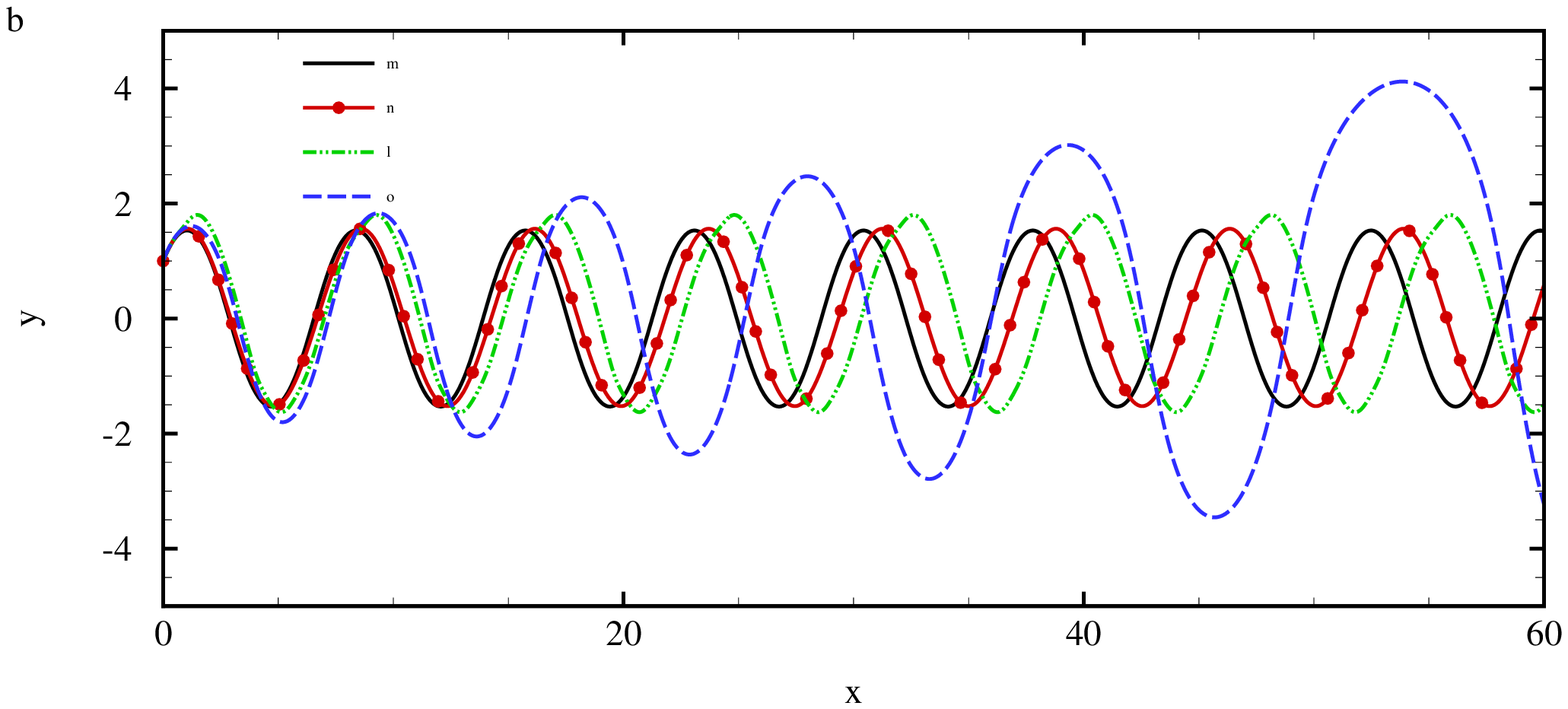}\\
  \includegraphics[width=0.92\linewidth]{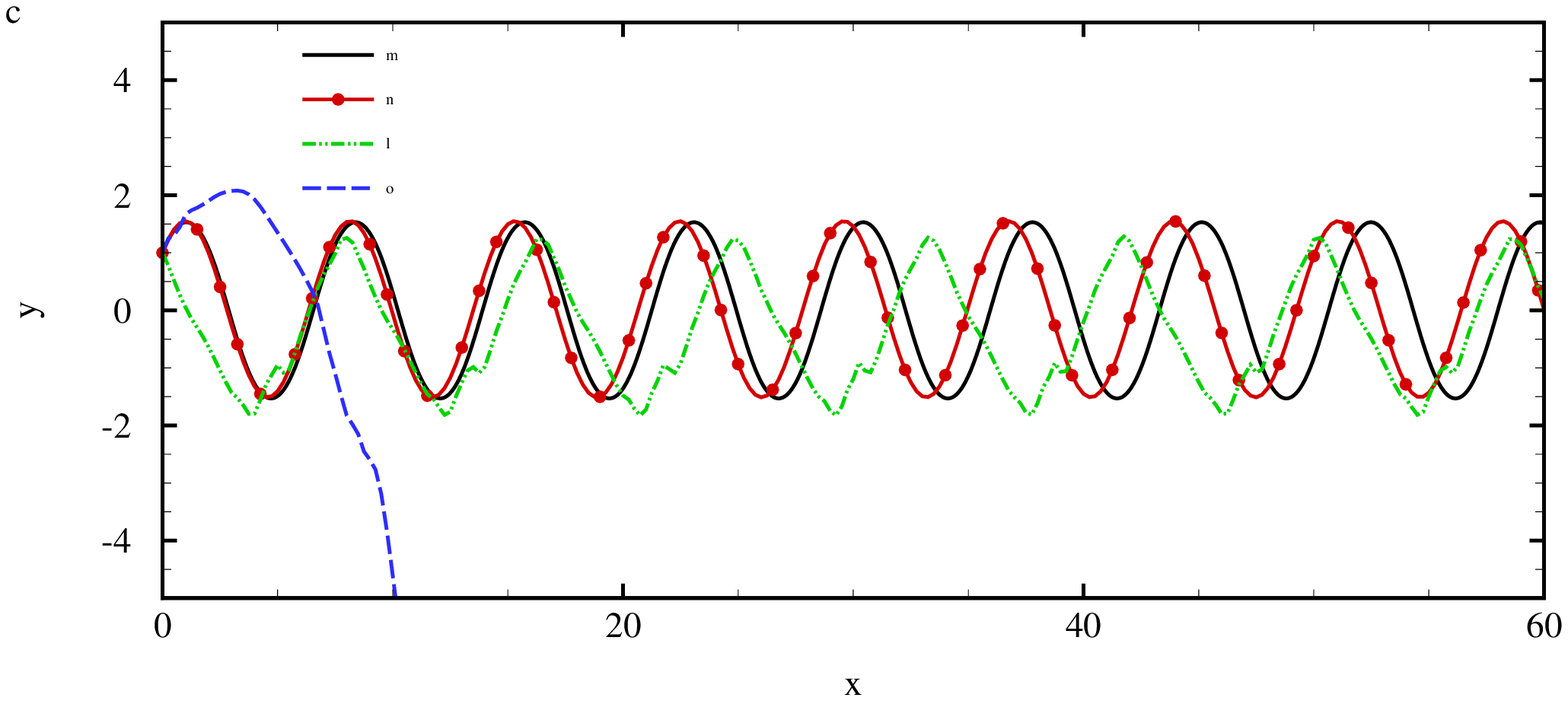}\\
  \caption{Prediction results of position $\bm q$ from $t=0$ to $t=20\pi$ for the pendulum problem using Taylor-net, HNN, and ODE-net  (a) without noise, (b) with noise $\sigma_1, \sigma_2 \sim \mathcal{N}(0,0.1)$, and (c) with noise $\sigma_1, \sigma_2 \sim \mathcal{N}(0,0.5)$. For all the models, we set the initial point as $(\bm{q}_0,\bm{p}_0)=(1,1)$. We use $T_{train}=0.01$, $T_{train}=0.5$ and $T_{train}=1$ to train the model in (a), (b), and (c), respectively. All the methods are trained until the $L_{validation}$ converges. }
  \label{fig:prediction_q}
\end{figure}

Additionally, in figure \ref{fig:Hamiltonian}, we plot the numerically solved ground truth, Taylor-net, HNN, and ODE-net calculated Hamiltonian for the pendulum problem. Figure \ref{fig:Hamiltonian} shows that the Taylor-net preserves the Hamiltonian relatively successfully, while ODE-net diverges away from the ground truth quickly. Although the predicting result of HNN does not seem to drift away from the ground truth, the divergent amplitude of HNN is greater than that of Taylor-net. Note that our model strictly preserves the symplectic structure, which is a geometric structure that cannot be quantitatively calculated and plotted. Since the symplectic structure is preserved, the Hamiltonian predicted by our model is much closer to the ground truth.

\begin{figure}
        \centering
        \includegraphics[width=.96\linewidth]{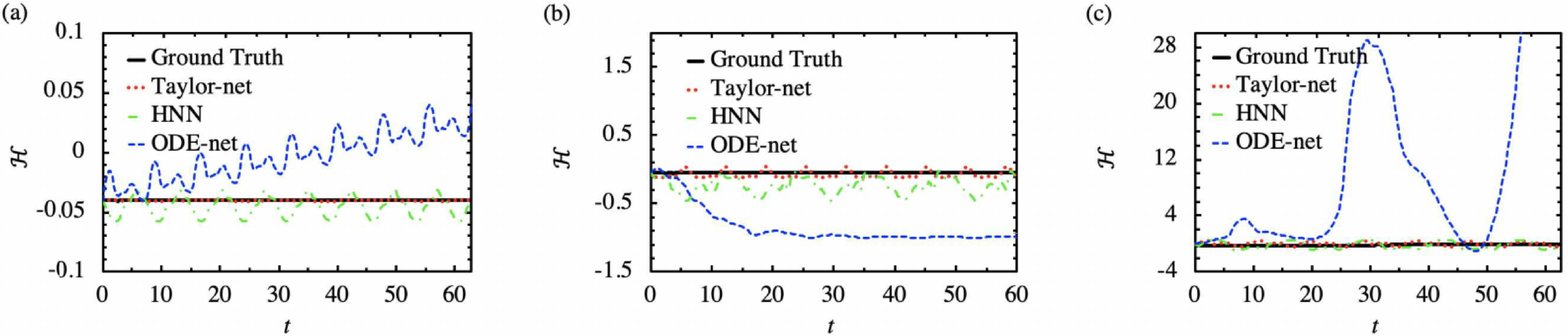}
    	\caption{Prediction results of Hamiltonian $\mathcal{H}$ from $t=0$ to $t=12\pi$ for the pendulum problem (a) without noise, (b) with noise $\sigma_1, \sigma_2 \sim \mathcal{N}(0,0.1)$, and (c) with noise $\sigma_1, \sigma_2 \sim \mathcal{N}(0,0.5)$.}
    	\label{fig:Hamiltonian}
\end{figure}

In real systems, it is almost impossible to collect data without noise. Therefore, with noisy data, the robustness of neural networks is particular important. Instead of using $(\bm{q}_{n},\bm{p}_{n})$ to train the model, we add some random noise to the true value so that it becomes $(\bm{q}_{n}+\sigma_1,\bm{p}_{n}+\sigma_2)$.
We test three models on two cases with small and large noises. We add noise $\sigma_1, \sigma_2 \sim \mathcal{N}(0,0.1)$ in the case of small noise and $\sigma_1, \sigma_2 \sim \mathcal{N}(0,0.5)$ in the case of large noise.  We use $T_{train}=0.5$ and $T_{train}=1$ to train the model in the cases of small and large noises respectively. In both cases, we use 50 samples and make prediction over $T_{predict}=20\pi$.

Figure \ref{fig:Prediction_result} shows the predicted $\bm p$ versus $\bm q$ using different methods. From figure \ref{fig:Prediction_result} (a), we find that Taylor-net discovers the unknown trajectory successfully, while ODE-net diverges from the true value quickly. Although the predicting result of HNN does not seem to drift away from the true dynamics, it does not fit the true trajectory as well as the prediction made by Taylor-net. The difference becomes clearer as we increase the noise. From figure \ref{fig:Prediction_result} (b), we observe that Taylor-net still makes predictions that are almost consistent with the true trajectories, while ODE-net completely fails to do so. Moreover, the prediction made by HNN is much worse than in the case of small noise, while the performance of Taylor-net remains as good as the previous case.

\begin{figure}
  \centering
  \psfrag{m}{\scriptsize Ground Truth}
  \psfrag{n}{\scriptsize Taylor-net}
  \psfrag{l}{\scriptsize HNN}
  \psfrag{o}{\scriptsize ODE-net}
  \psfrag{a}[c][c]{\footnotesize (a)}
  \psfrag{b}[c][c]{\footnotesize (b)}
  \psfrag{x}[c][c]{\footnotesize $\bm q$}
  \psfrag{y}[c][c]{\footnotesize $\bm p$}
  \includegraphics[width=.48\linewidth]{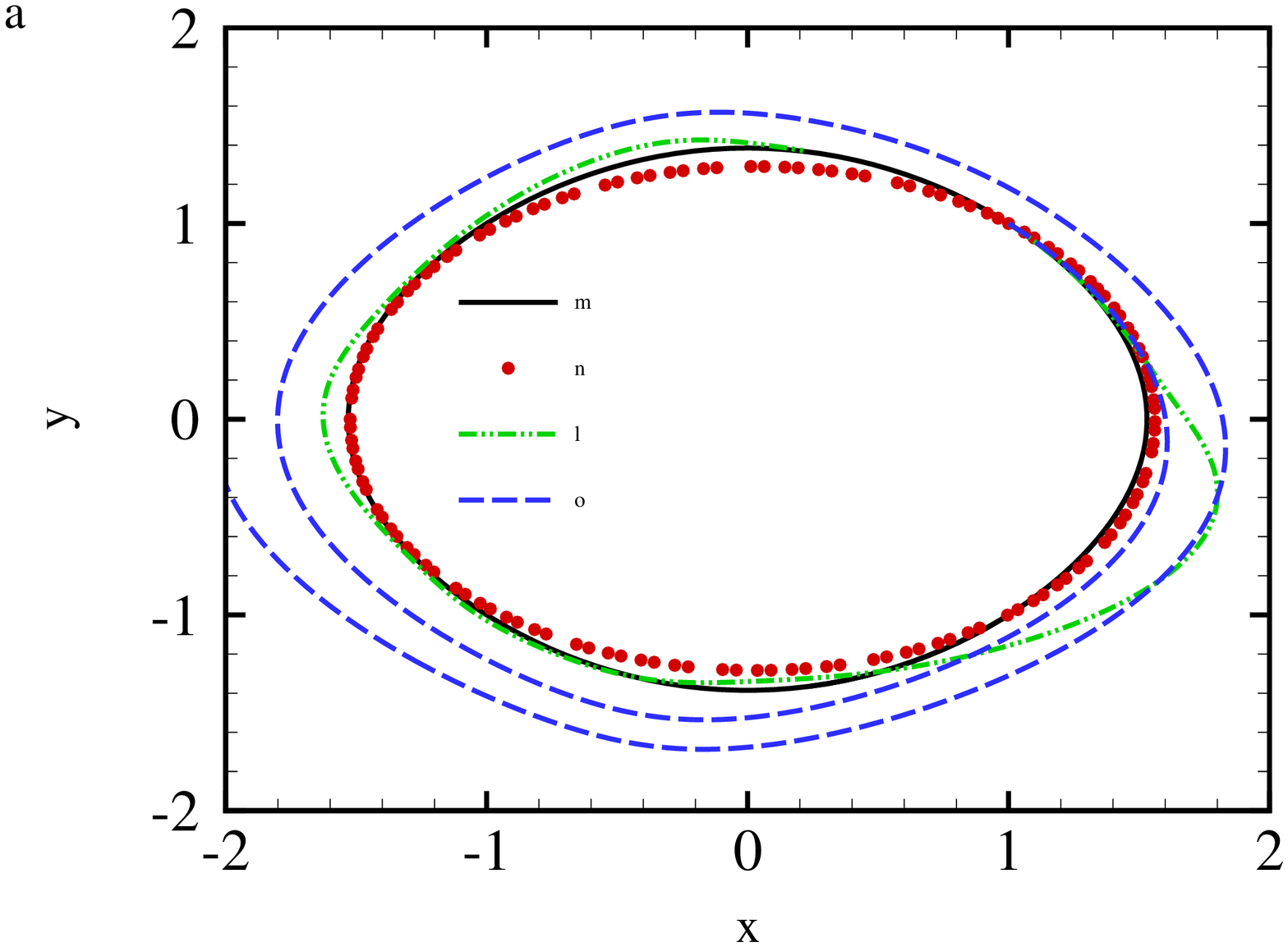}\hfill
  \includegraphics[width=.48\linewidth]{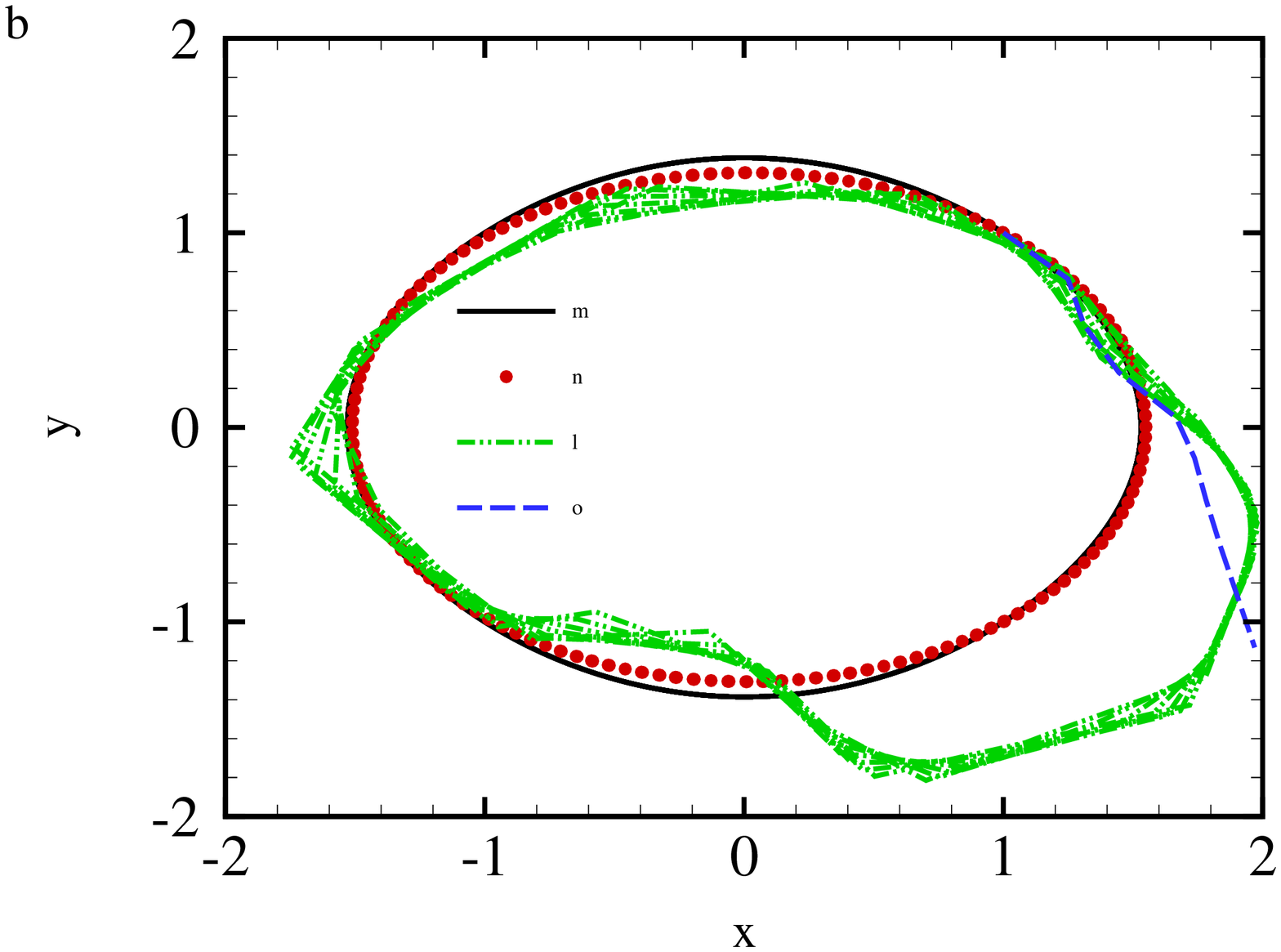}\\[0.5mm]
  \caption{Prediction results of position $\bm q$ and momentum $\bm p$ from $t=0$ to $t=20\pi$. (a) with noise $\sigma_1, \sigma_2 \sim \mathcal{N}(0,0.1)$ and (b) with noise $\sigma_1, \sigma_2 \sim \mathcal{N}(0,0.5)$ in training process. We use $T_{train}=0.5$ and $T_{train}=1$ to train the model in (a) and (b) respectively. All the methods are trained until the  $L_{validation}$ converges. In (a), we only plot the result of ODE-net until $t=4\pi$ because the result beyond that will further diverge from the ground truth and cannot be fit into the graph. For the same reason, we only plot the result of ODE-net until $t=\pi$ in (b).}
  \label{fig:Prediction_result}
\end{figure}

This can be more clearly seen from figure \ref{fig:Error} (b) and (c). We can see that $\epsilon_p^{(n_t)}$ of Taylor-net is consistent in both cases of small and large noises, while $\epsilon_p^{(n_t)}$ of HNN and ODE-net increase significantly and exhibit more fluctuation. It is worth noticing that in figure \ref{fig:Error} (c), $\epsilon_p^{(n_t)}$ of HNN becomes smaller towards the end of $T_{predict}$. However, it is not because the performance of HNN becomes better, but rather due to the fact that the predicted flow of HNN is off by one period of motion, which can be seen from figure \ref{fig:prediction_q} (c). The second and third rows of Table \ref{tab:compa_err} also give an overview on how the prediction error $\epsilon_p$ over $T_{predict}$ of the three methods differ. From figure \ref{fig:prediction_q} (b) and (c), we can clearly observe that the amplitude of predicted $\bm q$ using ODE-net increases as $t$ increases, and the amplitude of predicted $\bm q$ using HNN is slightly larger or smaller than that of the ground truth from the beginning. In contrast, due to the intrinsic symplectic structure of Taylor-net, the amplitude of predicted $\bm q$ using Taylor-net is inconsistent with the ground truth, without changing in time. Additionally, it is obvious that the predicted $\bm q$ using Taylor-net has the smallest phase shift among the three methods.

\subsection{Training sample size and convergence rate}\label{sub:sample}
Besides the strong predictive ability and robustness, we also want to highlight the significantly small $N_{train}$ and the fast convergence rate of our approach. In a complex physical system, the cost of acquiring data is high. Our model can learn from the dataset that contains less than 15 samples and still generate validation loss $L_{validation}$ that is below $10^{-4}$. In figure \ref{fig:sample_epoch} (a), we plot the $L_{validation}$ as a function of sample size using Taylor-net, HNN, and ODE-net. To make a fair comparison, we average the values of $L_{validation}$ over 50 trials. We can observe that the $L_{validation}$ for Taylor-net at 1 sample is around 5 times smaller than the $L_{validation}$ for ODE-net and the $L_{validation}$ for HNN. Although there are some fluctuations in $L_{validation}$ due to small $N_{train}$, the $L_{validation}$ for Taylor-net converges at around 10 samples, while the $L_{validation}$ for HNN is still decreasing. Although the $L_{validation}$ for ODE-net also converges around 10 samples, the value of its $L_{validation}$ is 10 times larger than that the $L_{validation}$ for Taylor-net.

Because of the intrinsically structure-preserving nature of our model, our model can well predict the dynamics of the underlying system even when it is trained for only a few epochs. In figure \ref{fig:sample_epoch} (b), we plot the prediction results from $t=0$ to $t=20\pi$ using Taylor-net, HNN, and ODE-net after only 1 epoch of training. The prediction results made by HNN and ODE-net completely fail to match the true flow, while Taylor-net predicts the truth to a level that can never be achieved using HNN and ODE-net at such a small number of epochs.
\begin{figure}
  \centering
  \psfrag{m}{\scriptsize Ground Truth}
  \psfrag{n}{\scriptsize Taylor-net}
  \psfrag{l}{\scriptsize HNN}
  \psfrag{o}{\scriptsize ODE-net}
  \psfrag{a}[c][c]{\footnotesize (a)}
  \psfrag{b}[c][c]{\footnotesize (b)}
  \psfrag{x}[c][c]{\footnotesize $\bm q$}
  \psfrag{y}[c][c]{\footnotesize $\bm p$}
  \psfrag{z}[c][c]{\footnotesize $L_{validation}$}
  \psfrag{w}[c][c]{\footnotesize $N_{train}$}
  \includegraphics[width=.48\linewidth]{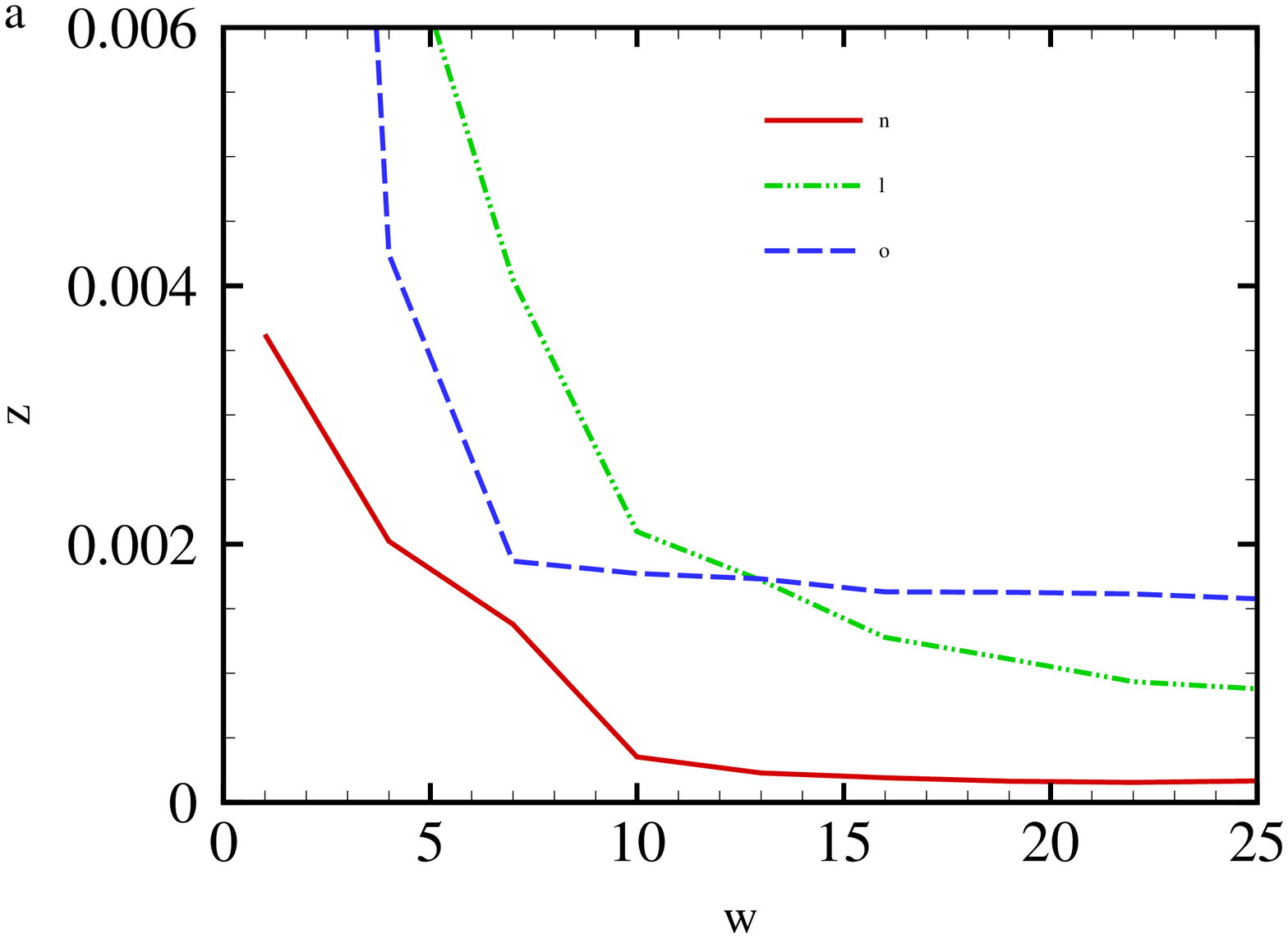}\hfill
  \includegraphics[width=.48\linewidth]{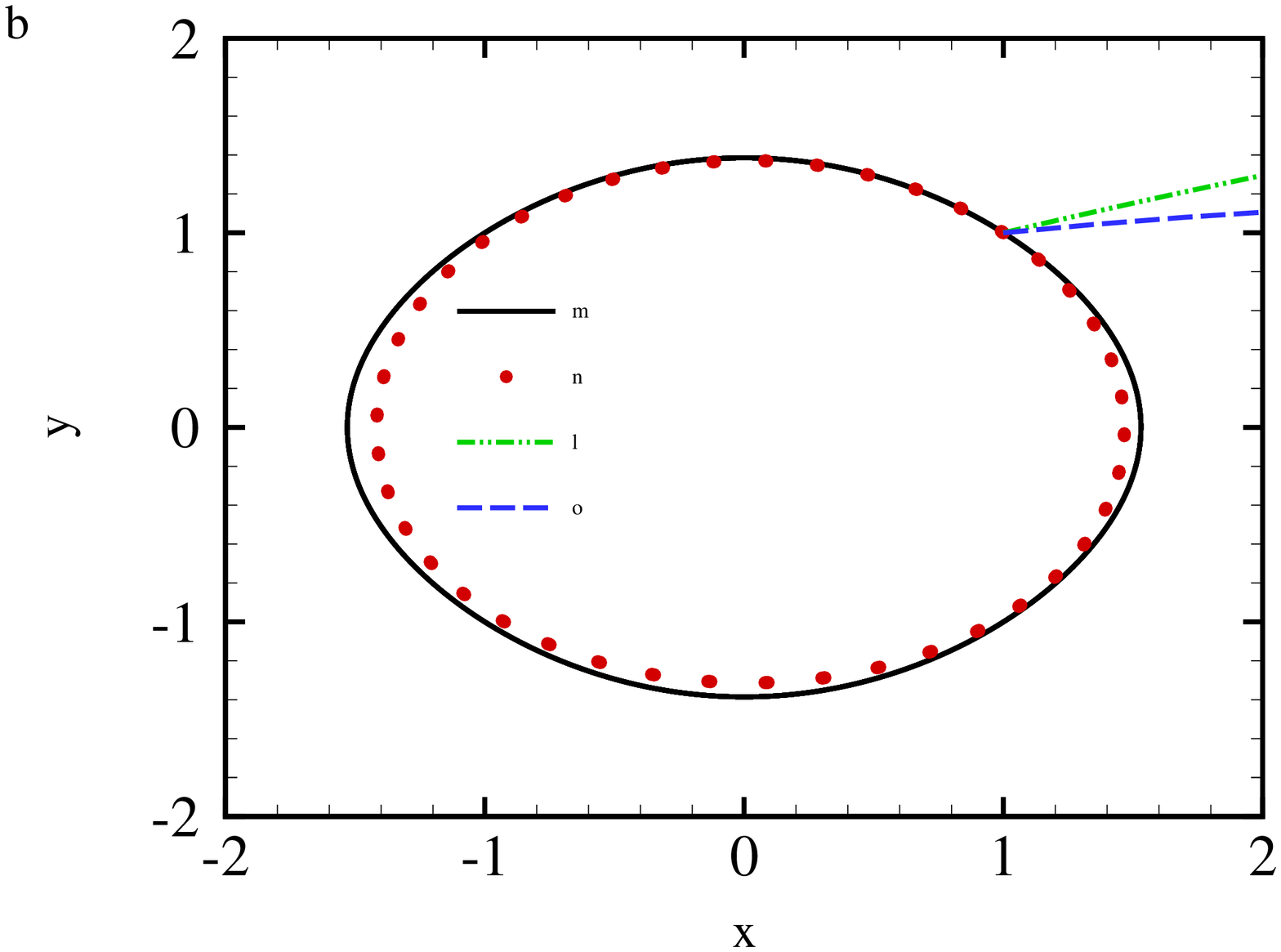}\\[0.5mm]
  \caption{(a) At 100 epochs, $L_{validation}$ as a function of sample size ranging from $N_{train}=1$ to $N_{train}=25$. The $L_{validation}$ is averaged over 50 trials. (b) Prediction results of position $\bm q$ and momentum $\bm p$ from $t=0$ to $t=2\pi$ from $t=0$ to $t=20\pi$ using trained models after 1 epoch.}
  \label{fig:sample_epoch}
\end{figure}

We summarize the main traits and performance of the three methodologies in Table \ref{tab:comparison}. We already emphasized enough that our model utilizes physics prior through constructing neural networks that intrinsically preserve the symplectic structure. Due to our model's structure-preserving ability, it can make accurate predictions with a very small training dataset that does not require any intermediate data. We also want to mention that HNN and ODE-net both require the analytical solutions of the temporal derivatives to train their models, which are often not obtainable from real systems. Moreover, besides the qualitative differences, we also compare the three methods quantitatively.
In the pendulum problem, we fix the sample size to be 15 and find Taylor-net only needs 100 epochs for $L_{train}$ to converge, while HNN and ODE-net need 1000 epochs and 7000 epochs respectively. We also test how many samples Taylor-net, HNN, and ODE-net need for $L_{validation}$ to decrease to $10^{-4}$. Notice that we train Taylor-net, HNN, and ODE-net until convergence, which is for 100, 1000, and 7000 epochs respectively. Taylor-net only needs 15 samples and 100 epochs of training to achieve $L_{validation}\sim 10^{-4}$, while HNN needs 50 samples and 1000 epochs and ODE-net needs 50 samples and 7000 epochs. If we train HNN and ODE-net for 100 epochs in the same manner as Taylor-net, their $L_{validation}$ will never reach $10^{-4}$.

\begin{table}
  \caption{Comparison between Taylor-net, HNN, ODE-net. $\checkmark$ represents the method preserves such property.  }
  \centering
  \begin{threeparttable}
  \setlength{\tabcolsep}{1mm}{
  \begin{tabular}{lccc}
  \hline
  Methods & Taylor-net & HNN & ODE-net\\
  \hline
  Utilize physics prior&$\checkmark$&$\checkmark$&Partially\\
  Preserve symplectic structure&$\checkmark$&Partially&\\
  No need for intermediate data&$\checkmark$&&\\
  No need for analytical solution of derivative&$\checkmark$&&\\
  \tnote{1}Number of epochs until $L_{train}$ converges&100&1000&7000\\
  \tnote{2}Sample size needed for $L_{validation}\sim 10^{-4}$&15&50&50\\
  \hline
  \end{tabular}
  \begin{tablenotes}
        \footnotesize
        \item[1] In the pendulum problem with sample size 15.
        \item[2] In the pendulum problem, train each model until convergence.
      \end{tablenotes}
  }
  \label{tab:comparison}
  \end{threeparttable}
\end{table}

\section{High-dimensional systems}\label{sec:nbody}
We want to extend our model into higher-dimensional dynamical systems. Let's consider a more complicated system, a multidimensional N-body system. Its Hamiltonian is given by

 \begin{equation}
  \mathcal {H}(\bm{q}, \bm{p}) = \frac{1}{2}\sum_{i=1}^{N_{body}}\Vert\bm{p_i}\Vert^2-\sum_{1\leq i < j \leq N_{body}}\frac{1}{\Vert\bm{q_j}- \bm{q_i}\Vert},
  \label{eq:kep2}
 \end{equation}
where $N_{body}$ is the number of bodies in the system, and \[N_{body}\times(\textrm{dimension of space}) = N.\]

In a two-dimensional space, consider a system with $N_{body}>2$ bodies. The cost of collecting training data from all $N_{body}$ bodies may be high, and the training process may be time inefficient. Thus, instead of collecting information from all $N_{body}$ bodies to train our model, we only use data collected from two bodies as training data to make a prediction of the dynamics of $N_{body}$ bodies. This is based on the assumption that the interactive models between particle pairs with unit particle strengths $m=1$ are the same, and their corresponding Hamiltonian can be represented as network $\hat{\mathcal H}_{\theta}(\bm x_j,\bm x_k)$, based on which the corresponding Hamiltonian of $N_{body}$ particles can be written as \cite{battaglia2016interaction,sanchez2019hamiltonian}

    \begin{equation}
        \mathcal{{H}_{\theta}} = \sum_{i, j = 1}^{N_{body}} m_j m_k \hat{\mathcal H}_{\theta}(\bm{x_j},\bm{x_k}).
    \label{eq:nbody}
    \end{equation}
We embed \eqref{eq:nbody} into the symplectic integrator that includes $m_j$ to obtain the final network architecture.

The setup of N-body problem is similar to the previous problems. The training period is $T_{train} = 0.08$ and the prediction period is $T_{predict}=2\pi$. Similar to the setup of previous problems, the learning rate is decaying every 10 epochs. Learning rate, $\gamma$, $i$, $step\_size$, and $M$ are the same as the setup of Kepler problem in Table \ref{tab:problems}, except we use 40 samples to train our model. The training process takes about 100 epochs for the loss to converge.
In figure \ref{fig:nbodies}, we use our trained model to predict the dynamics of a 3-body system and a 6-body system.
In both cases, our model can predict the paths accurately, with the predicted paths in the 3-body system matching the true paths perfectly. The success of these tasks shows the strong generalization ability of our model. Based on our experiments, our model can be applied to problems with a larger scale, for example, to predict the motions of hundreds of bodies.

 \begin{figure}
  \centering
  \psfrag{m}{\scriptsize True Path}
  \psfrag{n}{\scriptsize Taylor-net Path, body1}
  \psfrag{l}{\scriptsize  Taylor-net Path, body2}
  \psfrag{o}{\scriptsize  Taylor-net Path, body3}
  \psfrag{p}{\scriptsize  Taylor-net Path, body4}
  \psfrag{q}{\scriptsize  Taylor-net Path, body5}
  \psfrag{r}{\scriptsize  Taylor-net Path, body6}
  \psfrag{a}[c][c]{\footnotesize (a)}
  \psfrag{b}[c][c]{\footnotesize (b)}
  \psfrag{x}[c][c]{\footnotesize $\bm q$}
  \psfrag{y}[c][c]{\footnotesize $\bm p$}
  \includegraphics[width=.48\linewidth]{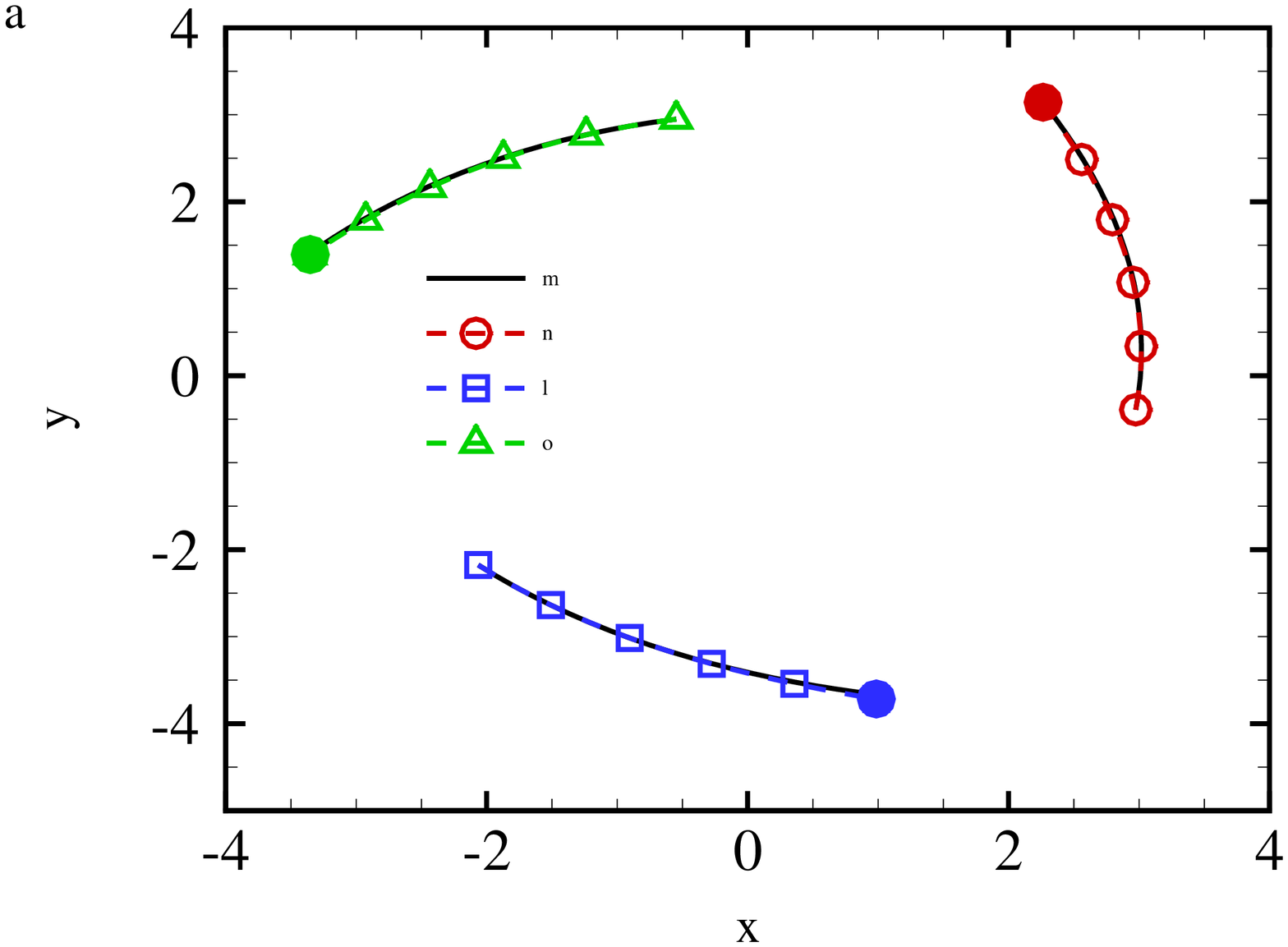}\hfill
  \includegraphics[width=.48\linewidth]{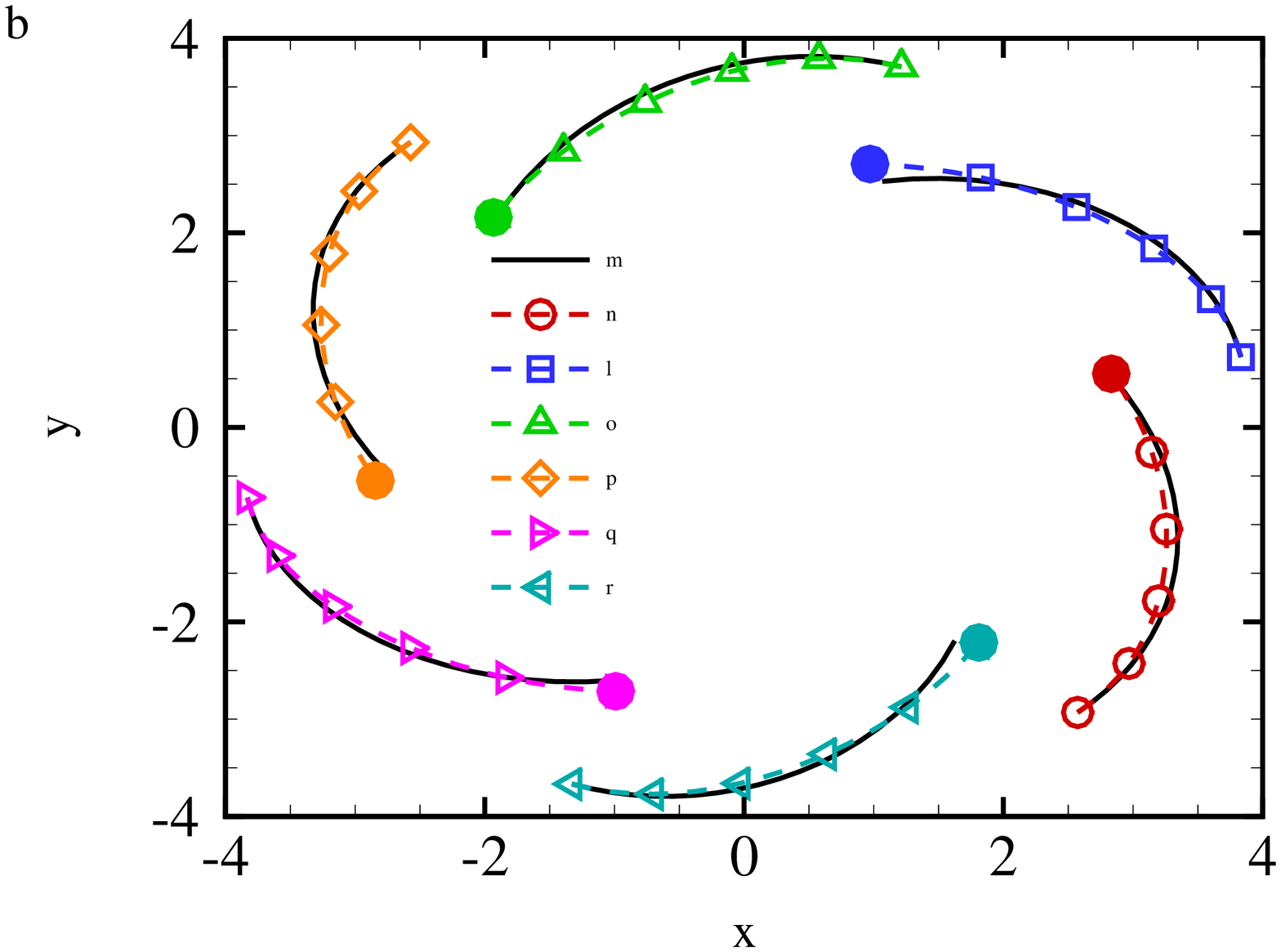}\\[0.5mm]
  \caption{Predicted position $\bm q$ and momentum $\bm p$ from $t=0$ to $t=2\pi$ (a) for 3 bodies and (b) for 6 bodies. In both (a) and (b), the training period is $T_{train} = 0.08$, and the prediction period is $T_{predict}=2\pi$. We use the same trained model to make the predictions in (a) and (b), which is trained for 100 epochs. }
  \label{fig:nbodies}
\end{figure}

\section{Conclusion}\label{sec:conclusions}

We present Taylor-nets, a novel neural network architecture that can conduct continuous, long-term predictions based on sparse, short-term observations. Taylor-nets consist of two sub-networks, whose outputs are combined using a fourth-order symplectic. Both sub-networks are embedded with the form of Taylor series expansion where each term is designed as a symmetric structure. Our model is able to learn the continuous-time evolution of the target systems while simultaneously preserving their symplectic structures. We demonstrate the efficacy of our Taylor-net in predicting a broad spectrum of Hamiltonian dynamic systems, including the pendulum, the Lotka--Volterra, the Kepler, and the H\'enon--Heiles systems.

We evaluate the performance of using the Taylor series as the underlying structure of Taylor-net by comparing it with the most used activation function, ReLU. The experimental results show that the neural networks perform better with Taylor series than with ReLU in the pendulum, the Lotka--Volterra, and the Kepler problems. In all three systems, the training loss of using the Taylor series is 10 to 100 times smaller than that of using ReLU. The strong representation ability of the Taylor series is an important factor that increases the accuracy of the prediction.

Moreover, we compare Taylor-net with other state-of-art methods, ODE-net and HNN, to access its predictive ability and robustness. We observe that the prediction error of Taylor-net over the prediction period is half of that of HNN and one-seventh of that of ODE-net. The predictions made by HNN and ODE-net also diverge from the true flow much faster as time increases. Additionally, to test the robustness of our model, we implement two testing cases with small and large noises. We add noise $\sigma_1, \sigma_2 \sim \mathcal{N}(0,0.1)$ in the case of small noise and $\sigma_1, \sigma_2 \sim \mathcal{N}(0,0.5)$ in the case of large noise. In the first case, Taylor-net discovers the unknown trajectory successfully, while ODE-net diverges away from the true value quickly. Although the predicting result of HNN does not seem to drift away from the true dynamics, it does not fit the true trajectory as well as the prediction made by Taylor-net. The prediction error of Taylor-net is about two-thirds and half of that of HNN and ODE-net respectively. The difference becomes clearer as we increase the noise. We observe that Taylor-net still makes predictions that are almost consistent with the true trajectories, while ODE-net completely fails to do so. Moreover, the prediction made by HNN is much worse than in the case of small noise, while the performance of Taylor-net remains as good as the previous case. The prediction error of Taylor-net is about half and one-twentieth of that of HNN and ODE-net respectively.

Additionally, we highlight the small training sample size and the fast convergence rate of our model. Under the same setting, HNN and OED-net need 5 times more samples than our model does to achieve the same validation loss, and their models take 10 times and 70 times more epochs to converge. We also test our model under only 1 epoch of training, the prediction results made by HNN and ODE-net completely fail to match the true flow, while Taylor-net predicts the truth to a level that is incomparable with HNN and ODE-net. Compared with HNN and OED-net, our model exhibits its unique computational merits by using small data with a short training period (6000 times shorter than the predicting period), small sample sizes, and no intermediate data to train the networks while outperforming others regrading the prediction accuracy, convergence rate, and robustness to a great extent.

Towards the end of our work in section \ref{sec:nbody}, we discussed the N-body system, which is a high-dimensional Hamiltonian system whose underlying governing equations are non-differentiable. In our future works, we will continue to explore solving this kind of high-dimensional problems, using some essential ideas of Taylor-nets with potential modifications. An other interesting direction will be to design a different neural network architecture with the same structure-preserving ability to learn the dynamics of non-separable Hamiltonian systems.

\section*{Acknowledgments}
This project is support in part by Neukom Institute CompX Faculty Grant, Burke Research Initiation Award, and NSF MRI 1919647. Yunjin Tong is supported by the Dartmouth Women in Science Project (WISP),  Undergraduate Advising and Research Program (UGAR), and Neukom Scholars Program.
Our code is available at \url{https://github.com/ytong6/Taylor-net}.

\bibliographystyle{plain}
\bibliography{refs}

\appendix
\section{Adjoint Method}\label{sec:adjoint}
\begin{figure}
    \centering
    \includegraphics[width=.9\linewidth]{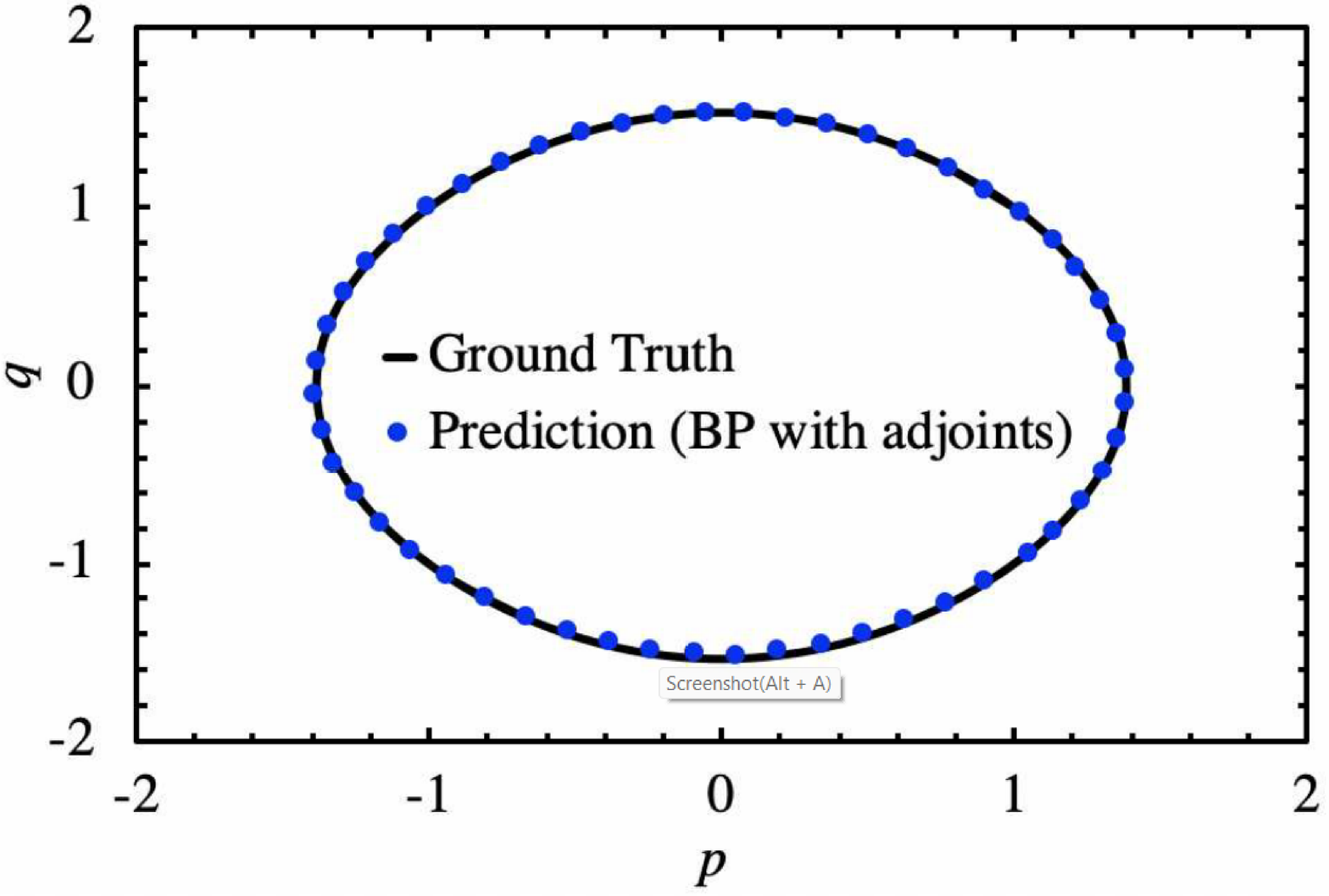}
	\caption{Prediction result of the pendulum problem using adjoint method as backward propagation}
	\label{fig:adjoint}
\end{figure}

Apply the chain rule to the gradients of loss function and consider the two neural networks $\bm T_p(\bm p,\bm \theta_p)$ and $\bm V_q(\bm q,\bm \theta_q)$ under the framework of neural ODEs, we obtain the following sets of equations:
 \begin{equation}
    \left\{
    \begin{aligned}
    \frac{\partial L}{\partial \bm \theta_p}&=\frac{\partial L}{\partial \bm q\left(t_{1}\right)} \frac{\mathrm{d} \bm q\left(t_{1}\right)}{\mathrm{d} \bm \theta_p} \\
    \bm q\left(t_{1}\right)&=\int_{t_0}^{t_1} \bm T_p(\bm p,\bm \theta_p) \mathrm{d} t+\bm q_{0}
    \end{aligned}\right.
  \end{equation}

  \begin{equation}
    \left\{
    \begin{aligned}
    \frac{\partial L}{\partial \bm \theta_q}&=\frac{\partial L}{\partial \bm p\left(t_{1}\right)} \frac{\mathrm{d} \bm p\left(t_{1}\right)}{\mathrm{d} \bm \theta_q} \\
   \bm p\left(t_{1}\right)&=-\int_{t_0}^{t_1} \bm V_q(\bm q,\bm \theta_q) \mathrm{d} t+ \bm p_{0}
    \end{aligned}\right.
  \end{equation}
 where $L$ is the loss function, and $\bm q\left(t_{1}\right)$, $\bm p\left(t_{1}\right)$, $\bm q_{0}$, and $\bm p_{0}$ are $\bm q$ and $\bm p$ at $t_1$ and $t_0$, respectively.

 Let $\bm b_p(t) =\mathrm{d} \bm q\left(t\right)/\mathrm{d} \bm \theta_p$ and $\bm b_q(t) =\mathrm{d} \bm p\left(t\right)/\mathrm{d} \bm \theta_q$, we derive the following equations:

\begin{equation}
 \begin{aligned}
  \bm b_p(t) =\frac{\mathrm{d} \bm q\left(t\right)}{\mathrm{d} \bm \theta_p}=\int_{t_0}^{t}\left[ \frac{\partial \bm T_p}{\partial \bm \theta_p}+\frac{\partial \bm T_p}{\partial \bm p} \bm b_p(\tau)\right] \mathrm{d} \tau\\
  \quad\Longrightarrow\quad
  \left\{
  \begin{aligned}
  \frac{\mathrm{d} \bm b_p(t)}{\mathrm{~d} t}&=\frac{\partial \bm T_p}{\partial \bm \theta_p}+\frac{\partial \bm T_p}{\partial \bm p} \bm b_p(t) \\
  \bm b_p(0)&=0,
  \end{aligned}\right.
  \end{aligned}
  \label{eq:bp}
\end{equation}

\begin{equation}
 \begin{aligned}
  \bm b_q(t) =\frac{\mathrm{d} \bm p\left(t\right)}{\mathrm{d} \bm \theta_q}=-\int_{t_0}^{t} \left[\frac{\partial \bm V_q}{\partial \bm \theta_q}+\frac{\partial \bm V_q}{\partial \bm q} \bm b_q(\tau)\right] \mathrm{d} \tau \\
  \quad\Longrightarrow\quad
  \left\{
  \begin{aligned}
  \frac{\mathrm{d} \bm b_q(t)}{\mathrm{~d} t}&=-\frac{\partial \bm V_q}{\partial \bm \theta_q}-\frac{\partial \bm V_q}{\partial \bm q} \bm b_q(t) \\
  \bm b_q(0)&=0.
  \end{aligned}\right.
  \end{aligned}
  \label{eq:bq}
\end{equation}

Given $\bm b_p$ and $\bm b_q$, we can rewrite the gradients of loss function as
\begin{equation}
 \frac{\partial L}{\partial \bm \theta_p}=\frac{\partial L}{\partial \bm q\left(t_{1}\right)} \bm b_p(t),
 \label{eq:lbp}
\end{equation}
and
\begin{equation}
 \frac{\partial L}{\partial \bm \theta_q}=\frac{\partial L}{\partial \bm p\left(t_{1}\right)} \bm b_q(t).
\label{eq:lbq}
\end{equation}
However, the scale for solving differential equations of $\bm b_q$ and $\bm b_p$ is too large. We therefore rewrite $\bm b_p$ and $\bm b_q$ as
\begin{equation}
\bm b_p(t)= \bm P_p(t) \int_{t_{0}}^{t} \bm P_p(\tau)^{-1} \frac{\partial \bm T_p}{\partial \bm \theta_p} \mathrm{d} \tau,
\end{equation}
and
\begin{equation}
\bm b_q(t)=- \bm P_q(t) \int_{t_{0}}^{t} \bm P_q(\tau)^{-1} \frac{\partial \bm V_q}{\partial \bm \theta_q} \mathrm{d} \tau.
\end{equation}

\begin{figure}
        \centering
        \includegraphics[width=.9\linewidth]{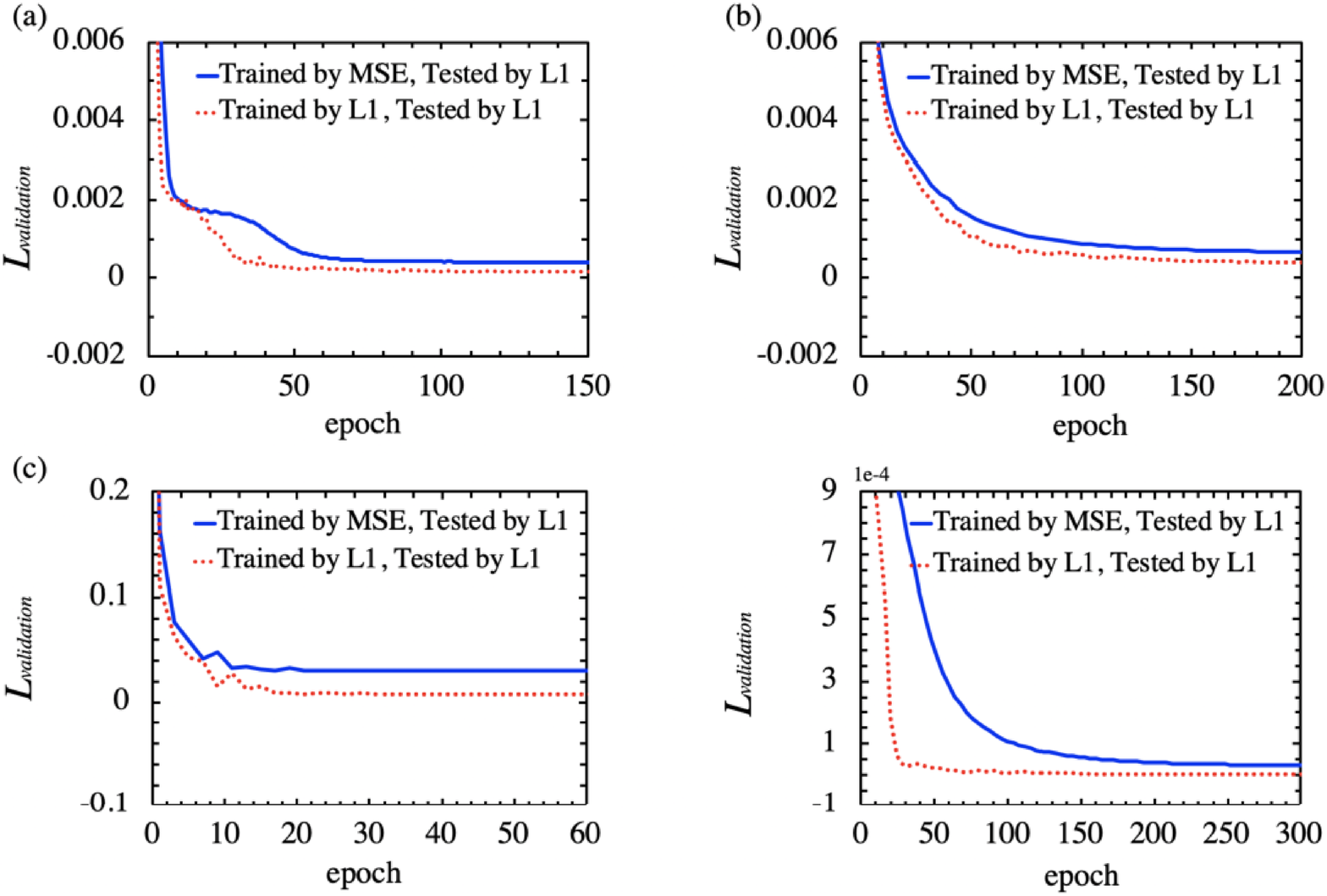}
    	\caption{Comparisons of validation losses with different training loss functions for (a) the pendulum, (b) the Lotka--Volterra, (c) the Kepler, and (d) the H\'enon--Heiles problems validated by L1 loss function. The red dashed lines represent the networks trained by L1 loss function; the blue solid lines represent the networks trained by MSE loss function.}
    	\label{fig:L1_comparison}
    \end{figure}
    
Substitute $\bm b_p$ and $\bm b_q$ into \eqref{eq:bp}, \eqref{eq:bq}, \eqref{eq:lbp}, and \eqref{eq:lbq}, we obtain two sets of equations:
\begin{equation}
    \left\{
    \begin{aligned}
  \frac{\partial L}{\partial \bm \theta_p}&=\frac{\partial L}{\partial \bm q(t_{1})} \bm P_p(t_{1})\int_{t_{0}}^{t_{1}} \bm P_p(t)^{-1} \frac{\partial \bm T_p}{\partial \bm \theta_p} \mathrm{d} t,\\
  \frac{\mathrm{d} \bm P_p(t)}{\mathrm{d} t}&= \frac{\partial \bm T_p}{\partial \bm p} \bm P_p(t),
    \end{aligned}\right.
\end{equation}
and
\begin{equation}
   \left\{
    \begin{aligned}
  \frac{\partial L}{\partial
  \bm \theta_q}&= - \frac{\partial L}{\partial \bm p(t_{1})} \bm P_q(t_{1})\int_{t_{0}}^{t_{1}} \bm P_q(t)^{-1} \frac{\partial \bm V_q}{\partial \bm \theta_q} \mathrm{d} t,\\
  \frac{\mathrm{d}\bm P_q(t)}{\mathrm{d} t}&= -\frac{\partial \bm V_q}{\partial \bm q} \bm P_q(t).
   \end{aligned}\right.
\end{equation}
    \begin{figure}
        \centering
        \includegraphics[width=.9\linewidth]{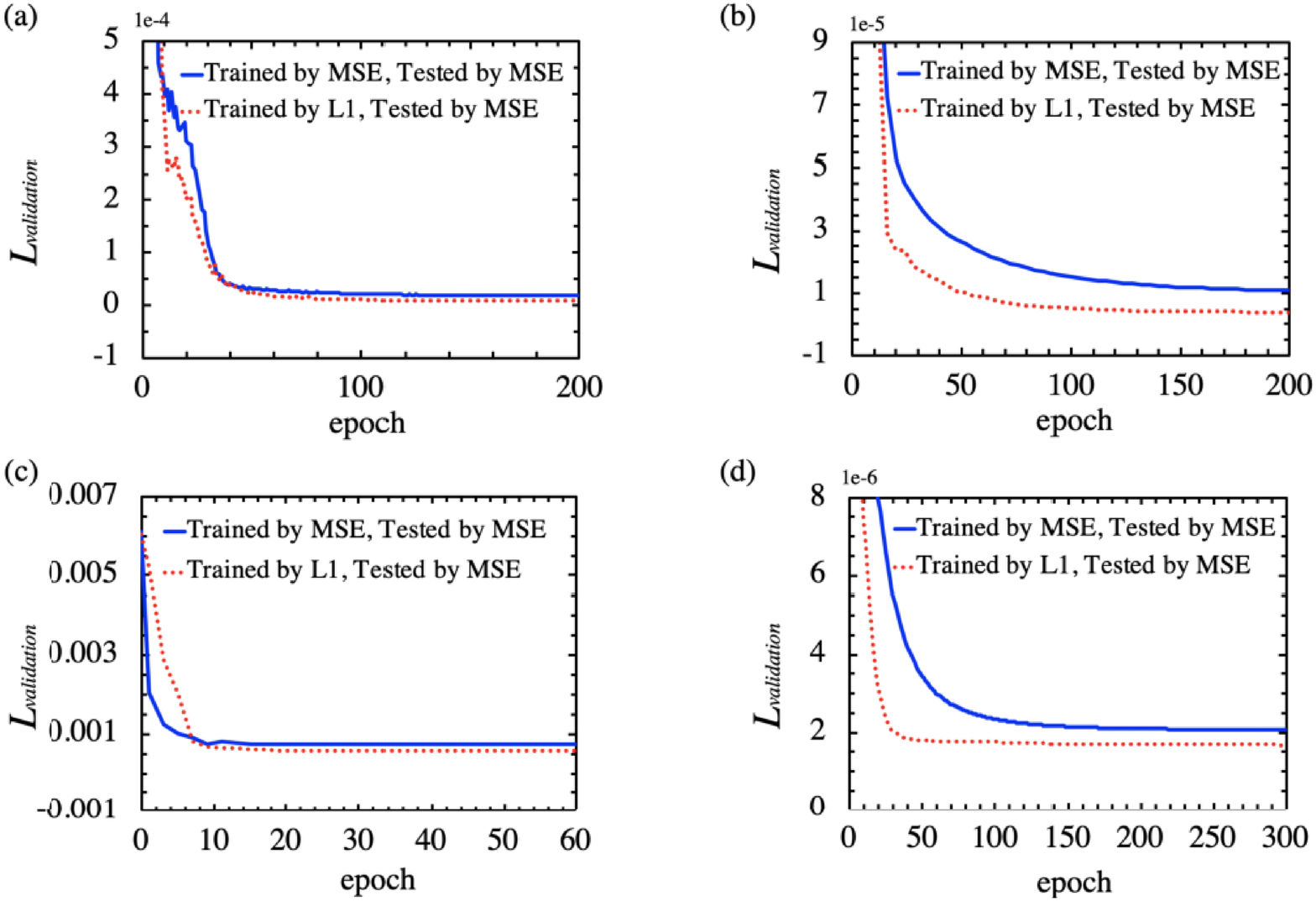}
    	\caption{Comparisons of validation losses with different training loss functions for (a) the pendulum, (b) the Lotka--Volterra, (c) the Kepler, and (d) the H\'enon--Heiles problems validated by MSE loss function. The red dashed lines represent the networks trained by L1 loss function; the blue solid lines represent the networks trained by MSE loss function.}
    	\label{fig:MSE_comparison}
    \end{figure}
However, the scale for solving $\bm P_p$ and $\bm P_q$ is still too large. We now consider the adjoint states $\bm a_p(t)$ and $\bm a_q(t)$:
\begin{equation}
  \bm a_p(t)=\frac{\partial L}{\partial \bm q(t_{1})} \bm P_p(t_{1}) \bm P_p(t)^{-1},\quad \bm a_p(t_1) = \frac{\partial L}{\partial \bm q(t_{1})},
  \label{eq:a_p}
\end{equation}
and
\begin{equation}
  \bm a_q(t)=\frac{\partial L}{\partial \bm p(t_{1})} \bm P_q(t_{1}) \bm P_q(t)^{-1},\quad \bm a_q(t_1) = \frac{\partial L}{\partial \bm p(t_{1})}.
\label{eq:a_q}
\end{equation}

We can then rewrite the gradient of loss function regarding to $\bm \theta_p$ as
\begin{equation}
  \frac{\mathrm{d} L}{\mathrm{d} \bm \theta_p}= \frac{\partial L}{\partial \bm q(t_{1})} \bm P_p(t_{1})\int_{t_{0}}^{t_{1}} \bm P_p(t)^{-1} \frac{\partial \bm T_p}{\partial \bm \theta_p} \mathrm{d} t=\int_{t_{0}}^{t_{1}} \bm a_p(t) \frac{\partial \bm T_p}{\partial \bm \theta_p} \mathrm{d} t.
  \label{eq:l_p}
\end{equation}
Similarly, the gradient of loss function regarding to $\bm \theta_q$ can be derived with the result differs by the sign
\begin{equation}
  \frac{\mathrm{d} L}{\mathrm{d} \bm \theta_q}= - \int_{t_{0}}^{t_{1}} \bm a_q(t)  \frac{\partial \bm V_q}{\partial \bm \theta_q} \mathrm{d} t.
\label{eq:l_q}
\end{equation}

We now want to derive the derivative of $\bm a_p(t)$ and $\bm a_q(t)$. The derivative of $\bm a_p(t)$ can be derived as follows
\begin{equation}
  \begin{aligned}
  \frac{\mathrm{d} \bm a_p}{\mathrm{d} t} &=\frac{\partial L}{\partial \bm q(t_{1})} \bm P_p(t_{1}) \frac{\mathrm{d} \bm P_p(t)^{-1}}{\mathrm{~d} t} \\
  &=-\frac{\partial L}{\partial \bm q\left(t_{1}\right)} \bm P_p\left(t_{1}\right) \bm P_p(t)^{-1} \frac{\mathrm{d} \bm P_p(t)}{\mathrm{d} t} \bm P_p(t)^{-1} \\
  &=-\bm a_p(t) \frac{\mathrm{d} \bm P_p(t)}{\mathrm{d} t} \bm P_p(t)^{-1} \\
  &=-\bm a_p(t) \frac{\partial \bm T_p}{\partial \bm p}.
  \end{aligned}
 \label{eq:ap_t}
 \end{equation}

The derivative of $\bm a_q(t)$ can be found in a similar manner. We obtain that

\begin{equation}
\frac{\mathrm{d} \bm a_q}{\mathrm{d} t} = \bm a_q(t) \frac{\partial \bm V_q}{\partial \bm q}.
\label{eq:aq_t}
\end{equation}

Combine the results we found in \eqref{eq:a_p}, \eqref{eq:a_q}, \eqref{eq:l_p}, \eqref{eq:l_q}, \eqref{eq:ap_t}, and \eqref{eq:aq_t}, we obtain the sets of equations that are our final result
\begin{equation}
  \begin{dcases}
  \frac{\partial L}{\partial \bm \theta_p}=\int_{t_{0}}^{t_{1}} \bm a_p(t) \frac{\partial \bm T_p}{\partial \bm \theta_p} \mathrm{d} t,\\
  \frac{\mathrm{d} \bm a_p}{\mathrm{d} t} =- \bm a_p(t) \frac{\partial \bm T_p}{\partial \bm p},\\
  \quad \bm a_p(t_1)= \frac{\partial L}{\partial \bm q(t_{1})},
  \end{dcases}
\label{L_thetap}
\end{equation}
\begin{equation}
  \begin{dcases}
  \frac{\partial L}{\partial \bm \theta_q}= - \int_{t_{0}}^{t_{1}} \bm a_q(t) \frac{\partial \bm V_q}{\partial \bm \theta_q} \mathrm{d} t,\\
  \frac{\mathrm{d} \bm a_q}{\mathrm{d} t} = \bm a_q(t) \frac{\partial \bm V_q}{\partial q},\\
  \quad \bm a_q(t_1)= \frac{\partial L}{\partial \bm p(t_{1})}.
  \end{dcases}
\label{L_thetaq}
\end{equation}
Using \eqref{L_thetap} and \eqref{L_thetaq}, we calculate the gradients of loss function in the backward propagation.

Figure \ref{fig:adjoint}, shows the prediction result of the pendulum problem using the adjoint method as our backward propagation method. We can see that the prediction result matches the ground truth well. However, training using the adjoint sensitivity method is about 30 percent slower than training using the automatic differentiation method due to higher time complexity.

\section{Loss function ablation test}\label{sec:ablation}

We conduct the ablation test on the pendulum, the
    Lotka--Volterra, the Kepler, and the H\'enon--Heiles problems to compare the validation loss after convergence with different training loss functions in the training process. Figure \ref{fig:L1_comparison} shows the comparison of validation losses with different training loss functions in the training process of different problems validated by L1 loss function. Figure \ref{fig:MSE_comparison} shows the comparison of validation losses with different training loss functions in the training process of different problems validated by MSE loss function. We observe that for all problems, the validation loss with $L1$ is smaller than that with MSE after convergence. The better performance of $L1$ may be due to MSE loss's high sensitivity to outliers. This explains why we choose $L1$ loss function as our training loss function.

\end{document}